\documentclass[conference,10pt]{IEEEtran}
\IEEEoverridecommandlockouts
% The preceding line is only needed to identify fundin
\usepackage{float}
%\usepackage[title]{appendix}
%\usepackage{cases}
%\restylefloat{table}
%\pdfminorversion=4
\usepackage{amsmath}
\usepackage{amsfonts}
\usepackage{mathtools}
\usepackage{amssymb}
\usepackage{float}
\usepackage{xcolor}
\usepackage{enumerate}
\usepackage{isomath}
\usepackage{bm}
\usepackage{cite}
\usepackage{caption}
\usepackage{subfig}
\usepackage{graphicx}
\usepackage{algorithm,algcompatible,amsmath}
% https://tex.stackexchange.com/q/83169/5764
\algnewcommand\INPUT{\item[\textbf{Input:}]}%
\algnewcommand\OUTPUT{\item[\textbf{Output:}]}%
\restylefloat{table}
\usepackage{tikz} 
\usepackage{lipsum}
\newcommand\blfootnote[1]{%
  \begingroup
  \renewcommand\thefootnote{}\footnote{#1}%
  \addtocounter{footnote}{-1}%
  \endgroup
}
\usetikzlibrary{arrows, positioning, automata}
\usetikzlibrary{shapes.geometric, arrows}
\pdfminorversion=4

\def\xfoo#1^#2\relax#3\valign{%
\mathbf{#1}\ifx\valign#2\valign\else^{\mathbf{#2}}\fi}

\def\KL{{\rm KL}}
\def\JS{{\rm JS}}

\def\EWRM{{\rm EWRM}}

\def\avg{{\rm avg}}

\def\Escr{\mathscr{E}}

%flexible definitions
%\def\sbf{\mathbf{s}}
%\def\Sbf{\mathbf{S}}
%\def\xbf{\mathbf{x}}
%\def\Xbf{\mathbf{X}}
%\def \Xm{\mathrm{X}}
%\def \Tm{\mathrm{T}}
\def \Wscr{\mathscr{W}}

%%%%%%%%%%%%%%%%%%%%%%%%%%%%%%%%%%%%%
%        macros.tex                  %
%%%%%%%%%%%%%%%%%%%%%%%%%%%%%%%%%%%%%%

%From PhD thesis
\usepackage{amsmath, amssymb, bbm, xspace}
\usepackage{epsfig}
\usepackage{longtable}
\usepackage{color}
\usepackage{mathrsfs}
\usepackage{comment}

\usepackage{courier}
%\usepackage{courier}
% \usepackage[T1]{fontenc}
% \usepackage[sc]{mathpazo}
% \linespread{1.05}         % Palatino needs more leading (space between lines)

% \usepackage{titlesec}
% \titlelabel{\qed \thetitle \quad}
% \titleformat*{\section}{\itshape}

% This abstract environment is a little narrower
% than the default.  Note the capital A...
% \newenvironment{Abstract}
%   {\begin{center}
%     \textbf{Abstract} \\
%     \medskip
%   \begin{minipage}{4.8in}}
%   {\end{minipage}
% \end{center}}

\newtheorem{theorem}{Theorem}[section]
\newtheorem{assumption}{Assumption}[section]

\newtheorem{lemma}{Lemma}[section]

\newtheorem{corollary}[theorem]{Corollary}

%\newcounter{theorem}[chapter]
% \renewcommand{\thetheorem}{\thechapter.\arabic{theorem}}
% 
% \renewenvironment{theorem}{\refstepcounter{theorem} {\ \\[2ex]}{\bf \large \textsf{Theorem \thetheorem}} \newline \it}{}

\def\bkE{{\rm I\kern-.17em E}}
\def\bk1{{\rm 1\kern-.17em l}}
\def\bkD{{\rm I\kern-.17em D}}
\def\bkR{{\rm I\kern-.17em R}}
\def\bkP{{\rm I\kern-.17em P}}

\def\bkZ{{\bf{Z}}}

\def\bkE{{\rm I\kern-.17em E}}
\def\bk1{{\rm 1\kern-.17em l}}
\def\bkD{{\rm I\kern-.17em D}}
\def\bkR{{\rm I\kern-.17em R}}
\def\bkP{{\rm I\kern-.17em P}}

\makeatletter
\newcommand{\pushright}[1]{\ifmeasuring@#1\else\omit\hfill$\displaystyle#1$\fi\ignorespaces}
\newcommand{\pushleft}[1]{\ifmeasuring@#1\else\omit$\displaystyle#1$\hfill\fi\ignorespaces}
\makeatother

 %%% i.e.,

\def\bkZ{{\bf{Z}}}
\def\b12{(\beta_1,\beta_2)}

\newcounter{example}
\renewcommand{\theexample}{\thesection.\arabic{example}}

\newcounter{remark}
\renewcommand{\theremark}{\thesection.\arabic{remark}}

\def\Hscr{\mathscr{H}}

\def\Ebb{\mathbb{E}}
\newlength{\noteWidth}
\setlength{\noteWidth}{.75in}
\long\def\notes#1{\ifinner
{\tiny #1}
\else
\marginpar{\parbox[t]{\noteWidth}{\raggedright\tiny #1}}
\fi\typeout{#1}}

 \def\notes#1{\typeout{read notes: #1}} %uncomment for final version

%\input{/home/ankur/latexfiles/algorithm2e.sty}
%  \def\proof{\penalty 25\noindent{\bf Proof.}\nobreak\hskip 1em\ignorespaces}

                 % math mode

\newcommand{\ie}{i.e.\@\xspace} %%% i.e.,
 %%% e.g.,
\newcommand{\etal}{et al.\@\xspace} %%% e.g., Gill \etal (1986)

% Scientific Notation (E)

% Miscellaneous items
%\renewcommand{\vec}[1]{\ensuremath{\mathbf{#1}}}  %requires amsmath package

\newcommand{\Real}{\ensuremath{\mathbb{R}}}

\def\Ebb{\mathbb{E}}

\def\Ibb{{\mathbb{I}}}

\def\exp{\mathop{\hbox{\rm exp}}}

\def\spose#1{\hbox to 0pt{#1\hss}}

\def\text #1{\hbox{\quad#1\quad}}

\def\Escr{\mathcal{E}}

%%% Fixed-size glue, only for math mode

    %           thin space
    %         medium space
    %          thick space
\def\nthinsp{\mskip -2   mu}

%%% Specially lowered subscripts (see \sub above)

%%% Subscripts

%%% Superscript stars

\def\superstar{^{\raise 0.5pt\hbox{$\nthinsp *$}}}
\def\SUPERSTAR{^{\raise 0.5pt\hbox{$*$}}}
  %%% Less space after \xstar if followed by , or .

\def\lamstarT {\lambda^{\raise 0.5pt\hbox{$\nthinsp *$}T}}

%%% bars, hats, tildes  (\skew4 is the default)

%%%%%%%%%%%%%%%%%%%%%%%%%%%%%%%%%%%%%
%  Math italic and calligraphic fonts
%%%%%%%%%%%%%%%%%%%%%%%%%%%%%%%%%%%%%

\def\Ascr{{\cal A}}

\def\Pscr{{\cal P}}

\def\Sscr{{\cal S}}

\def\Wscr{{\cal W}}

\def\Zscr{{\cal Z}}

\def\supp{{\rm supp}}

\def\non{\nonumber}

\let\forallnew\forall
\renewcommand{\forall}{\forallnew\ }
\let\forall\forallnew

		\def\bkE{{\rm I\kern-.17em E}}
		\def\bk1{{\rm 1\kern-.17em l}}
		\def\bkD{{\rm I\kern-.17em D}}
		\def\bkR{{\rm I\kern-.17em R}}
		\def\bkP{{\rm I\kern-.17em P}}
		\def\bkY{{\bf \kern-.17em Y}}
		\def\bkZ{{\bf \kern-.17em Z}}
		\def\bkC{{\bf  \kern-.17em C}}
%		\usepackage[ruled,vlined,boxed]{algorithm2e}
		%\setlength{\baselineskip}{24pt}
		%\setlength{\topmargin}     { 0.0in}    % One inch top margins
		%\setlength{\headsep}       {24.0pt}    % with Header
		%\setlength{\textheight}    { 8.5in}
		%\setlength{\textwidth}     { 6.5in}    % One inch side margins
		%\setlength{\oddsidemargin} { 0.0in}
		%\setlength{\evensidemargin}{ 0.0in}
		%\setlength{\marginparsep}  { 0.0pt}    % No Margin notes
		%\setlength{\marginparwidth}{ 0.0pt}
		%\pagestyle{headings}

%%%%%%%%%%%%%%%%%%%%%
%Colours
%%%%%%%%%%%%%%%%%%%%%
%\definecolor{violet}{rgb}{.5,0.0,.80}
%\def\violet{{ \color{violet} }}
%\definecolor{dgreen}{rgb}{0,0.5,0.0}

%
{\begin{list}{}%
         {\setlength{\leftmargin}{#1}}%
         \item[]%
}
{\end{list}}

		% -- Environments --
		\def\bsp{\begin{split}}
		\def\beq{\begin{eqnarray}}
		\def\bal{\begin{align*}}
		\def\bc{\begin{center}}
		\def\be{\begin{enumerate}}
		\def\bi{\begin{itemize}}
		\def\bs{\begin{small}}
		\def\bS{\begin{slide}}
		\def\ec{\end{center}}
		\def\ee{\end{enumerate}}
		\def\ei{\end{itemize}}
		\def\es{\end{small}}
		\def\eS{\end{slide}}
		\def\eeq{\end{eqnarray}}
		\def\eal{\end{align*}}
		\def\esp{\end{split}}
		\def\qed{ \vrule height7.5pt width7.5pt depth0pt}  %width4.17pt depth0pt} 

	\def\cp2problem#1#2#3#4{\fbox
		 {\begin{tabular*}{0.9\textwidth}
			{@{}l@{\extracolsep{\fill}}l@{\extracolsep{6pt}}l@{\extracolsep{\fill}}c@{}}
				#1 & & $#4 $ 
			\end{tabular*}}}

		\def\bkE{{\rm I\kern-.17em E}}
		\def\bk1{{\rm 1\kern-.17em l}}
		\def\bkD{{\rm I\kern-.17em D}}
		\def\bkR{{\rm I\kern-.17em R}}
		\def\bkP{{\rm I\kern-.17em P}}
		
		\def\bkZ{{\bf{Z}}}

\newcommand {\beeq}[1]{\begin{equation}\label{#1}}
\newcommand {\eeeq}{\end{equation}}
\newcommand {\bea}{\begin{eqnarray}}
\newcommand {\eea}{\end{eqnarray}}

\def\texitem#1{\par\smallskip\noindent\hangindent 25pt
               \hbox to 25pt {\hss #1 ~}\ignorespaces}

%\def\example{{\bf Example :\ }}

%%% Local Variables: 
%%% mode: latex
%%% TeX-master: "report"
%%% End: 

\def\bsp{\begin{split}}
		\def\beq{\begin{eqnarray}}
		\def\bal{\begin{align*}}
		\def\bc{\begin{center}}
		\def\be{\begin{enumerate}}
		\def\bi{\begin{itemize}}
		\def\bs{\begin{small}}
		\def\bS{\begin{slide}}
		\def\ec{\end{center}}
		\def\ee{\end{enumerate}}
		\def\ei{\end{itemize}}
		\def\es{\end{small}}
		\def\eS{\end{slide}}
		\def\eeq{\end{eqnarray}}
		\def\eal{\end{align*}}
		\def\esp{\end{split}}
		\def\qed{ \vrule height7.5pt width7.5pt depth0pt}  %width4.17pt depth0pt} 

%                        \newenvironment{examplee}{{ \bf Example : }}{\hfill $\Box$ \vspace{6pt}}

                        %%%%%%%%%%%%%%%%%%%%%%%%%%%%%%%%%%%%%
%        macros.tex                  %
%%%%%%%%%%%%%%%%%%%%%%%%%%%%%%%%%%%%%%

\usepackage{amsmath, amssymb, xspace}
\usepackage{epsfig}
\usepackage{longtable}
\usepackage{color}
\usepackage{mathrsfs}
\usepackage{subfig}
\newenvironment{proof}[1][]{{\noindent \textit{ Proof}: }}{\hfill \qed \vspace{3pt}\\ }

%\newenvironment{examplee}{{\noindent \bf Example}}{\hfill $\square$\hspace{-4.5pt}\vspace{6pt}}
%\newcommand{\ie}{i.e.\@\xspace}

%\newcommand{\minim}{\mathop{\hbox{\rm min}}}
%\newcommand{\maximize}[1]{\displaystyle\maxim_{#1}}
%\newcommand{\maxim}{\mathop{\hbox{\rm maximize}}}

%\newcommand{\minimize}[1]{\displaystyle\minim_{#1}}

%			\def\problemsmall#1#2#3#4{\fbox
%		 {\begin{tabular*}{0.8 \textwidth}
%			{@{}l@{\extracolsep{\fill}}l@{\extracolsep{-3pt}}l@{\extracolsep{\fill}}c@{}}
%				#1 & & $\minimize{#2}$ $#3$ & $ $ \\[2pt]
%				$\sub\ $	 &$ $     & $#4$ & $ $
%			\end{tabular*}}
%			}

%\def\maximproblemsmalla#1#2#3#4{\fbox
%		 {\begin{tabular*}{0.49\textwidth}
%			{@{}l@{\extracolsep{\fill}}l@{\extracolsep{-4pt}}l@{\extracolsep{\fill}}c@{}}
%				#1 &  & $\maximum{#2}$  $#3$ & $ $ \\[4pt]
%					  $\sub \ $  &   & $#4$ &  $ $
%			\end{tabular*}}
%			}		
%			\def\maxproblemlarge#1#2#3#4{\fbox
%		 {\begin{tabular*}{1.0\textwidth}
%			{@{}l@{\extracolsep{\fill}}l@{\extracolsep{-2pt}}l@{\extracolsep{\fill}}c@{}}
%				#1 & $\max{#2}$ & $#3$ & $ $ \\[3pt]
%					 & $\sub $    & $#4$ & $ $
%			\end{tabular*}}
%			}

%%%%%%%%%%%%%%%%%%%%%%%%%%%%%
% 	bb, bf and bk
%%%%%%%%%%%%%%%%%%%%%%%%%%%%%

%\input{../../../Mylatexfiles/macros-math}
			
\ifCLASSINFOpdf
  \else
 
\fi

\author{\IEEEauthorblockN{Sharu Theresa Jose and Osvaldo Simeone}}
\title{Information-Theoretic Bounds on Transfer Generalization Gap Based on Jensen-Shannon Divergence}
\vspace{-0.1cm}
\begin{document}
\maketitle
\begin{abstract}
In transfer learning, training and testing data sets are drawn from different data distributions.
The transfer generalization gap is the difference between the population loss on the target data distribution and the training loss. The training data set generally includes data drawn from both source and target distributions. This work presents novel information-theoretic upper bounds on the average transfer generalization gap that capture $(i)$ the domain shift between the target data distribution $P'_Z$ and the source distribution $P_Z$ through a two-parameter family of generalized $(\alpha_1,\alpha_2)$-Jensen-Shannon (JS) divergences; and $(ii)$ the sensitivity of the transfer learner output $W$ to each individual sample of the data set $Z_i$ via the mutual information $I(W;Z_i)$. For $\alpha_1 \in (0,1)$, the $(\alpha_1,\alpha_2)$-JS divergence can be bounded even when the support of $P_Z$ is not included in that of $P'_Z$. This contrasts the Kullback-Leibler (KL) divergence $D_{\KL}(P_Z||P'_Z)$-based bounds of Wu \emph{et al.} \cite{wu2020information}, which are vacuous under this assumption. Moreover, the obtained bounds hold for unbounded loss functions with bounded cumulant generating functions, unlike the $\phi$-divergence based bound of Wu \emph{et al.} \cite{wu2020information}. We also obtain new upper bounds on the average transfer excess risk in terms of the $(\alpha_1,\alpha_2)$-JS divergence for empirical weighted risk minimization (EWRM), which minimizes the weighted average training losses over source and target data sets. Finally, we provide a numerical example to illustrate the merits of the introduced bounds.
% for an illustrative example where $D_{\KL}(P_Z||P'_Z)=\infty$, we show that the JS-divergence based bound is tighter than the $\phi$-divergence bound when only data from source domain is available for training. 
\end{abstract}
\blfootnote{The authors are with King's Communications, Learning, and Information Processing (KCLIP) lab at the Department of Engineering of King’s College London, UK (emails: sharu.jose@kcl.ac.uk, osvaldo.simeone@kcl.ac.uk).
The authors have received funding from the European Research Council
(ERC) under the European Union’s Horizon 2020 Research and Innovation
Programme (Grant Agreement No. 725731).The authors thank Prof. Tan (NUS) for useful discussions.}
\vspace{-0.6cm}
\section{Introduction}\label{sec:intr}
In conventional learning, data sets for training and testing are drawn from the same underlying data distribution. \textit{Transfer learning} considers the scenario where a learning algorithm trained using a data set drawn from a source data distribution, or \textit{source domain}, is tested on a data set drawn from a generally different target data distribution, or \textit{target domain}.
 %In practice, while abundant labelled data from source domain are available during training, few or no labelled data from the target domain are available. 
 The goal of transfer learning is to infer a model parameter $w$ from observation of the data from the source domain and possibly also from target domain, so that it generalizes well on test data from the target domain \cite{torrey2010transfer}. 

The objective of the transfer learner is to minimize the generalization, or population, loss $L_g(w)$, which is the average loss of model parameter $w$ over the test data drawn from the target data distribution. However, this is not available at the learner since the target domain distribution is unknown. Instead, the learner can compute the empirical training loss $L_t(w|Z^M)$ of the parameter $w$ on the data set $Z^M$, which is comprised of data from source and, possibly, target domains. We define the transfer learner as a stochastic mapping $P_{W|Z^M}$ from the input training set to the output space of model parameters. The difference between the generalization loss and the training loss, $\Delta L(w|Z^M)=L_g(w)-L_t(w|Z^M)$, known as the \textit{transfer generalization gap}, is a key metric to evaluate the performance of a transfer learning algorithm. Specifically, if the transfer generalization gap is small, on average or with high probability, the performance of the model parameter $w$ on the training loss can be taken as a reliable estimate of the generalization loss.

Existing works on transfer learning \cite{ben2007analysis,ben2010theory,mansour2009domain,zhang2012generalization} have largely focused on obtaining \textit{high-probability}, probably approximately correct (PAC),  bounds on the transfer generalization gap.
 These bounds have the general form: With probability at least $1-\delta$, with $\delta \in (0,1)$, over the training set $Z^M$, the  bound
$|\Delta L(w|Z^M)| \leq \epsilon$ holds uniformly for all $w \in \Wscr$.
The upper bound $\epsilon$ has been expressed as a function of a distance measure $d(\Sscr,\mathcal{T})$ that quantifies the distributional shift between source ($\Sscr$) and target ($\mathcal{T}$) domains.
  Specifically, the main goal of these studies has been to define appropriate distance measures $d(\Sscr,\mathcal{T})$ that can be estimated from finite data with reasonable accuracy. For example,  Ben \etal in \cite{ben2007analysis} and \cite{ben2010theory} introduce the $d_{\Ascr}$ distance and $\Hscr\Delta \Hscr$-divergence respectively for the 0-1 loss, while Mansour \etal \cite{mansour2009domain} proposed a \textit{discrepancy distance}  that holds for any loss functions. 
%  An integral probability metric based distance measure was proposed in \cite{zhang2012generalization}.
   These measures depend on the structural properties of the model class $\Wscr$ through the model complexity measures such as Vapnik-Chervonenkis (VC) dimension and Radmacher complexity.  Similar high probability bounds have also been studied for the optimality gap, \ie, $\Ebb_{P_{W|Z^m}}[L_g(w)]-\min_{w \in \Wscr}L_g(w)$.

In contrast to these prior works, this paper focuses on obtaining information-theoretic bounds on the {average transfer generalization gap}, $\Ebb_{P_{Z^M}P_{W|Z^M}}[\Delta L(W|Z^M)]$, where the average is with respect to the training data and the transfer learner. These bounds are fundamentally different from the existing high-probability bounds, and thus they are not directly comparable. Unlike the high-probability bounds which ignore the properties of the training algorithm, the information-theoretic bounds describe the generalization capability of arbitrary transfer learners via their sensitivity to the input training set.

Our work is related to the recent study in \cite{wu2020information} on information-theoretic bounds for transfer learning. 
 The resulting bound captures the impact of the domain shift via the Kullback-Leibler (KL) divergence $D_{\KL}(P_Z||P'_Z)$ between the source-domain data distribution $P_Z$ and target-domain data distribution $P'_Z$.
% and the sensitivity of the transfer learner output model parameter $W$ to each individual sample of the data set $Z_i$ by the mutual information $I(W;Z_i)$. 
 The KL divergence based measure of domain shift suffers from a serious disadvantage: it is well-defined only when the source distribution $P_Z$ is absolutely continuous with respect to $P'_Z$ $(P_Z \ll P'_Z)$, and takes value $\infty$ otherwise. This results in vacuous bounds under various practical conditions, such as for supervised learning problems where the data labels $Y$ are deterministic functions of the feature $X$ within data samples $Z=(X,Y)$; and when the support of the source data distribution includes that of the target data distribution.
%{\color{red} Other information-theoretic metrics used to capture the domain shift include R{\'e}nyi divergence \cite{mansour2012multiple,germain2013pac,hoffman2018algorithms}  and Wasserstein distance \cite{redko2017theoretical}. The estimation from data of all these information-theoretic metrics is, however, generally more difficult \cite{rowland2019orthogonal} than the measures studied in \cite{ben2007analysis}-\cite{ben2010theory}.}
% The resultingInspired by the information-theoretic bounds for generalization gap in conventional learning \cite{xu2017information}, recently, Wu \etal in \cite{wu2020information} obtain an information-theoretic upper bound on the transfer generalization gap, that captures the domain shift by the KL divergence between the source distribution $P_Z$ and target distribution $P'_Z$. However, KL divergence serves as a valid measure of domain shift as long as the source distribution $P_Z$ is absolutely continuous with respect to $P'_Z$($(P_Z << P'_Z)$, and evaluates to $\infty$ when the absolute continuity is not satisfied.
\subsection{Contributions}
In this work, we mitigate the above drawback of KL divergence based bounds on average transfer generalization gap, by using a two-parameter $(\alpha_1,\alpha_2)$-family of Jensen-Shannon (JS) divergences with $\alpha_1,\alpha_2 \in [0,1]$ to capture the domain shift. This family includes as special cases the conventional JS divergence with $\alpha_1=\alpha_2=0.5$, as well as Nielsen's symmetric $\alpha$-skew and asymmetric $\alpha$-skew JS divergences \cite{nielsen2020generalization}, which corresponds to the choices $\alpha_2=0.5$ and $\alpha_1=\alpha_2$ respectively. For the setting when data from both source and target distributions are available for training, we obtain new information-theoretic upper bounds on the average transfer generalization gap that capture $(i)$ the impact of the domain shift via the $(\alpha_1,\alpha_2)$-JS divergence between source $P_Z$ and target $P'_Z$ distributions; and $(ii)$ the generalization capability of the transfer learning algorithm through the mutual information between algorithm output and each individual sample of data set. The $(\alpha_1,\alpha_2)$-JS divergence is bounded for $\alpha_1 \in (0,1)$ \cite[Thm.~1]{yamano2019some}, and gives non-vacuous bounds even when $P_Z \not \ll P'_Z$. Moreover, the obtained bound holds for unbounded loss functions with bounded cumulant generating function (CGF). 
%\footnote{ The cumulant generating function (CGF) of a random variable $X \sim P_X$ is defined as $\log \Ebb_{P_X}[\exp(\lambda(X-\Ebb_{P_X}[X]))]$ for $\lambda \in \Real$.}.}
 In contrast, the $\phi$-divergence based bound with $\phi(x)=|x-1|$ in \cite[Corollary 3]{wu2020information}, which also holds when $P_Z \not \ll P'_Z$ , requires loss functions to have bounded $L_{\infty}$-norm. 

Our work is motivated by the recent study \cite{shui2020beyond} that employs the conventional JS divergence, with the aim of upper bounding the target domain generalization loss $L_g(w)$ as a function of the source-domain generalization loss for a fixed model parameter $w$. 
%The JS divergence was favoured since it can be estimated using binary classification given data drawn both from distributions.
  % Moreover, the JS $D_{\JS}^{0.5}(P'_Z||P_Z)$ can be estimated using binary classification given data drawn both from distributions and minimizing it is known to be the inherent principle in many domain adversarial training algorithms.
% Moreover, JS divergence is bounded, \ie $0\leq D_{\JS}^{0.5}(P'_Z||P_Z) \leq \log(2)$, and can be estimated using binary classification given data drawn both from distributions.
Moving beyond \cite{shui2020beyond}, in this work, we consider the performance of a training algorithm that chooses model parameter $w$ by minimizing the weighted average of training losses over source and target data \cite{wu2020information} -- an approach referred to as \textit{empirical weighted risk minimization} (EWRM). We specialize the $(\alpha_1,\alpha_2)$-JS divergence-based bounds on average transfer generalization gap to EWRM, and obtain new upper bounds on the \textit{average optimality gap} for EWRM. This is unlike prior work \cite{ben2010theory,wu2020information}, which obtain high probability bounds on the optimality gap. We show via an example that by choosing the parameters $\alpha_1,\alpha_2$, the $(\alpha_1,\alpha_2)$-JS divergence can better capture the relative impact of source and target data sets on the performance of EWRM, yielding tighter bounds than with the conventional JS divergence.
\section{Problem Formulation}
In transfer learning, we are given a data set that consists of: $(i)$ data points from a \textit{source domain}
with an underlying \textit{unknown} data distribution, $P_{Z} \in \Pscr(\Zscr)$, defined in a subset or vector space $\Zscr$; as well as $(ii)$ data from a \textit{target domain} with a generally different data distribution $P'_{Z} \in \Pscr(\Zscr)$. 
Specifically, 
the learner has  access to a training data set $Z^M=(Z_1,Z_2, \hdots,Z_M)$, which consists of $\beta M$, for some fixed $\beta \in (0,1]$,  independent and identically distributed (i.i.d.) samples $Z^{\beta M}=(Z_1,\hdots, Z_{\beta M}) \sim P^{\beta M}_{Z}$ drawn from the source domain $P_{Z}$, and $(1-\beta)M$ i.i.d. samples $Z^{(1-\beta)M}=(Z_{\beta M+1},\hdots Z_M)\sim P'^{(1-\beta)M}_{Z}$ from the target domain $P'_{Z}$. The learner does not know the distributions $P_{Z}$ and $P'_{Z}$.
% of $m$ independent and identically distributed (i.i.d.) samples drawn from the source task data distribution $P_{Z|\tau}$.
  The learner uses the training data set $Z^M$ to choose a model, or hypothesis, $W$ from the model class $\Wscr$ by using a \textit{randomized} learning algorithm defined by a conditional distribution $P_{W|Z^M} \in \Pscr(\Wscr)$ as $W \sim P_{W|Z^M}$.
% \begin{align}
%W \sim P_{W|Z^m}\in \Pscr(\Wscr). \label{eq:baselearner_training}\end{align}
 The conditional distribution $P_{W|Z^M}$ defines a stochastic mapping from the training data set $Z^M$ to the model class $\Wscr$. 
%Thus, $Z_i^{(k)} \sim \mu_k$, for $i=1,\hdots, m$ and we denote $\Zbf^{(k)} \sim \mu_k^m$ to denote that $\Zbf^{(k)}$ is distributed according to $m-$fold Cartesian product of $\mu_k$. 
%Training on data $\Zbf^{(k)}$ follows a base-learning algorithm, $q(\cdot|\theta, \Zbf^{(k)}) \in \Pscr(\Phi)$, which is a stochastic mapping from the data set $\Zbf^{(k)}$ to the space of output hypotheses $ \Phi$. 
%The training procedure  \eqref{eq:baselearner_training} is parameterized by a vector $u \in \Uscr$ of \textit{hyperparameters}, which is considered to be part of the inductive bias along with the model class $\Wscr$. As an example, the base learner $P_{W|Z^m,u}$ may follow Stochastic Gradient Descent (SGD) updates with hyperparameters $u$ including the learning rate and the initialization point. 

 The performance of a model parameter vector $w \in \Wscr$ on a data sample $z \in \Zscr$ is measured by a loss function $l(w,z)$ where $l:\Wscr \times \Zscr \rightarrow \Real_{+}$. 
 %Note that, unlike the standard assumption made to derive PAC-Bayesian bounds in \cite{ mcallester1999pac}, we do not place any restrictions on the boundedness of this loss function.
The \textit{generalization loss}, also known as population loss, for a model parameter vector $w \in \Wscr$ is evaluated on the target domain, and is defined as
\begin{align}
&L_{g}(w)=\Ebb_{P'_Z}[l(w,Z)], \label{eq:genloss}
\end{align} where the average is taken over a test example $Z$ drawn independently  of $Z^M$ from the target task data distribution $P'_Z$.
 The generalization loss cannot be computed by the learner, given that the data distribution $P'_Z$ is unknown. A typical solution is for the learner to evaluate instead the \textit{weighted average training loss} on the data set $Z^M$, which is defined as the empirical average  $L_{t}(w|Z^M)=$
\begin{align}
 \frac{\gamma}{\beta M}\sum_{i=1}^{\beta M}l(w,Z_i)+\frac{1-\gamma}{(1-\beta) M}\sum_{i=\beta M+1}^{M}l(w,Z_i), \label{eq:trainingloss}
\end{align}
where $\gamma\in [0,1]$ is a hyperparameter \cite{ben2010theory}, \cite{wu2020information}. We call the algorithm that minimizes \eqref{eq:trainingloss} as the empirical weighted risk minimization (EWRM) algorithm. In formulation, EWRM algorithm outputs
\begin{align}
W^{{\rm EWRM}}(Z^M)= \arg \min_{w \in \Wscr} L_t(w|Z^M)\label{eq:EWRM}
\end{align} for input training set $Z^M$.

The difference between generalization loss \eqref{eq:genloss} and training loss \eqref{eq:trainingloss}, known as \textit{transfer generalization gap}, is defined as
\begin{align}
\Delta L(w|Z^M)=L_g(w)-L_t(w|Z^M), \label{eq:transfergengap}
\end{align}
and is a key metric that relates to the performance of the learner. As mentioned, this is because a small transfer generalization gap ensures that the training loss \eqref{eq:trainingloss} is a reliable estimate of the generalization loss \eqref{eq:genloss}.

%{\color{red} Choice of $\gamma$}

% An information theoretic study of the average transfer generalization gap and of the excess risk gap of a learner that minimizes \eqref{eq:trainingloss} was presented in \cite{wu2020information}. However, the resulting bounds capture the \textit{domain shift} between the source and target tasks in terms of the KL divergence between $P_Z$ and $P'_Z$. For source and task distributions such that ${\rm supp}(P_Z) \supset {\rm supp}(P'_Z)$, the above KL divergence goes to infinity and gives vacuous bounds. To mitigate this, we present novel information-theoretic bounds for transfer learning which capture the domain shift via Jenson-Shannon divergence. Precisely, we obtain bounds of the following flavours - bounds on average transfer generalization gap, and high-probability PAC-Bayesian bounds.
\section{$\alpha$-JS Divergence-Based Bounds on Average Transfer Generalization Gap}
In this section, we obtain bounds on the average  transfer generalization gap $\Delta L^{\avg}:=\Ebb_{P_{Z^M}P_{W|Z^M}}[\Delta L(W|Z^M)]$, where the training set distribution is given as $P_{Z^M}=P_Z^{\beta M} \times P'^{(1-\beta)M}_{Z}$. Towards this goal, we assume the following.
\begin{assumption}\label{assum:1}
The loss function $l(W,Z)$ is $\sigma^2$-sub-Gaussian\footnote{A random variable $X \sim P_X$ is said to be $\sigma^2$-sub-Gaussian if its CGF, $\log \Ebb_{P_X}[\exp(\lambda(X-\Ebb_{P_X}[X]))]$, is upper bounded by $\lambda^2 \sigma^2/2$ for all $\lambda \in \Real$.} under $(W,Z) \sim P_W R^{\alpha_1}_Z$, where $P_W$ is the marginal of the joint distribution $P_{W|Z^M} P_{Z^M}$ and \begin{align}
R_Z^{\alpha_1}(z)=\alpha_1 P_Z(z)+(1-\alpha_1)P'_Z(z), \label{eq:mixturedistribution}
\end{align} for some $\alpha_1 \in [0,1]$, is a mixture of the source and target data distributions.
\end{assumption}

Note that if the loss function is bounded, \ie, $0\leq a \leq l(\cdot,\cdot) \leq b< \infty$, Assumption~\ref{assum:1} is satisfied with $\sigma^2=(b-a)^2/4$ under any data distribution $R^{\alpha_1}_Z$ for $\alpha_1 \in [0,1]$.
%The average generalization gap for the data distribution $P_{Z|\tau}$ and base learner $P_{W|Z^m,u}$ is defined as
%(W) \biggr], \label{eq:gap_singletask}
%$P_{Z^m,W|\tau,u}=P_{Z^m|\tau} \times P_{W|Z^m,u}$.

To derive bounds on the average transfer generalization gap, we consider the following family of $(\alpha_1,\alpha_2)$-JS divergences,
\begin{align}
D_{\JS}^{\alpha_1,\alpha_2}(P'_Z||P_Z)&=\alpha_2 D_{\KL}(P'_Z||R_Z^{\alpha_1})\non \\&+(1-\alpha_2)D_{\KL}(P_Z||R_Z^{\alpha_1})), \label{eq:alpha1alpha2JSdivergence}
\end{align}
where $\alpha_1, \alpha_2 \in [0,1]$. We refer to Section~\ref{sec:intr} for connections with existing JS divergences.
% In particular, setting $\alpha_2=0.5$ results in the symmetric skew-JS divergence, and $\alpha_2=\alpha_1$ results in the asymmetric skew-JS divergence of Nielsen\cite{nielsen2020generalization}. More importantly, taking $\alpha_1=\alpha_2=0.5$ results in the conventional JS divergence. Furthermore, in the regime of $\alpha_1 \in (0,1)$, the $(\alpha_1,\alpha_2)$-JS divergence is bounded (can be seen from \cite[Thm.~1]{yamano2019some}.}
%We now present our $\alpha_1$-JS-divergence-based bounds on the average transfer generalization gap. The $\alpha_1$-JS divergence between distributions $P_Z$ and $P'_Z$ is defined as \cite{nielsen2010family}

%In particular, if $\alpha_1=0.5$, $D_{\JS}^{0.5}(P'_Z||P_Z)$ yields the conventional JS divergence.
  Towards obtaining $(\alpha_1,\alpha_2)$-JS-divergence-based bounds, we decompose the transfer generalization gap \eqref{eq:transfergengap} as
\begin{align}
\Delta L(w|Z^M)&=\gamma (L_g(w)-L_t(w|Z^{\beta M}))\non \\&+(1-\gamma)(L_g(w)-L_t(w|Z^{(1-\beta) M})), \label{eq:decomposition} 
\end{align}
where $L_t(w|Z^{\beta M})=\sum_{i=1}^{\beta M}l(w,Z_i)/(\beta M)$ is the training loss over the source-domain data and $L_t(w|Z^{(1-\beta) M})=\sum_{i=\beta M+1}^{ M}l(w,Z_i)/((1-\beta) M)$ is the training loss of the target-domain data. By separately bounding the average of the two differences in the above decomposition, we obtain the following bound.
\begin{theorem}\label{thm:averagebound}
Under Assumption~\ref{assum:1} and for $(\beta,\alpha_2) \in (0,1)$, the following upper bound on the average transfer generalization gap holds for any algorithm $P_{W|Z^M}$,
\begin{align}
&\Delta L^{\avg}\hspace{-0.1cm} \leq \hspace{-0.1cm} \frac{\gamma\sigma\sqrt{2 \hat{\alpha_2}}}{\beta M}\sum_{i=1}^{\beta M} \hspace{-0.1cm}\sqrt{ D_{\JS}^{\alpha_1,\alpha_2}(P'_Z||P_Z)+(1-\alpha_2)I(W;Z_i)}\non \\
&\hspace{-0.2cm}+\frac{2(1-\gamma)\sigma}{(1-\beta)M}\sum_{i=\beta M+1}^M \sqrt{2 D_{\KL}(P'_Z||R_Z^{\alpha_1})+I(W;Z_i)}, \label{eq:avgbound}
\end{align}
where $\hat\alpha_2=1/\alpha_2 +1/(1-\alpha_2)$.
%where $D_{\JS}^{\alpha_1}(P'_Z||P_Z)$ is defined in \eqref{eq:JSdivergence}.
\end{theorem}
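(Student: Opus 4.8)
The plan is to start from the decomposition \eqref{eq:decomposition} and bound the expectation of each of the two weighted differences separately, since the first involves the source samples $Z_1,\dots,Z_{\beta M}\sim P_Z$ and the second the target samples $Z_{\beta M+1},\dots,Z_M\sim P'_Z$. Taking expectations over $P_{Z^M}P_{W|Z^M}$ and using $\Ebb[L_g(W)]=\Ebb_{P_W\times P'_Z}[l(W,Z)]$ (with $Z$ independent of $W$) together with linearity, the task reduces to controlling, for each sample index $i$, the single-letter gap $\Delta_i:=\Ebb_{P_W\times P'_Z}[l(W,Z)]-\Ebb_{P_{W,Z_i}}[l(W,Z_i)]$; one then reassembles $\Delta L^{\avg}=\tfrac{\gamma}{\beta M}\sum_{i=1}^{\beta M}\Delta_i+\tfrac{1-\gamma}{(1-\beta)M}\sum_{i=\beta M+1}^{M}\Delta_i$.

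The workhorse is the Donsker--Varadhan change-of-measure bound: if $f(X)$ is $\sigma^2$-sub-Gaussian under a distribution $P$, then $\lambda(\Ebb_Q[f]-\Ebb_P[f])\le D_{\KL}(Q\|P)+\lambda^2\sigma^2/2$ for every distribution $Q$ and every $\lambda$, whence $|\Ebb_Q[f]-\Ebb_P[f]|\le\sqrt{2\sigma^2 D_{\KL}(Q\|P)}$ after optimizing over $\lambda$ of either sign. Because Assumption~\ref{assum:1} supplies sub-Gaussianity under the mixture reference $P_W R_Z^{\alpha_1}$, and since that reference differs both from $P_W\times P'_Z$ and from the per-sample joints $P_{W,Z_i}$, I would insert it and split $\Delta_i$ into $\bigl(\Ebb_{P_W\times P'_Z}[l]-\Ebb_{P_W R_Z^{\alpha_1}}[l]\bigr)+\bigl(\Ebb_{P_W R_Z^{\alpha_1}}[l]-\Ebb_{P_{W,Z_i}}[l]\bigr)$. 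The first bracket is at most $\sqrt{2\sigma^2 D_{\KL}(P'_Z\|R_Z^{\alpha_1})}$, since $D_{\KL}(P_W\times P'_Z\|P_W\times R_Z^{\alpha_1})=D_{\KL}(P'_Z\|R_Z^{\alpha_1})$; for the second bracket, apply the lemma with $Q=P_{W,Z_i}$ and the chain-rule identity $D_{\KL}(P_{W,Z_i}\|P_W R_Z^{\alpha_1})=D_{\KL}(P_{Z_i}\|R_Z^{\alpha_1})+I(W;Z_i)$, where $P_{Z_i}=P_Z$ for a source index $i\le\beta M$ and $P_{Z_i}=P'_Z$ for a target index $i>\beta M$.

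It then remains to merge the two square roots in each block. For a target index this leaves $\sqrt{2\sigma^2 D_{\KL}(P'_Z\|R_Z^{\alpha_1})}+\sqrt{2\sigma^2(D_{\KL}(P'_Z\|R_Z^{\alpha_1})+I(W;Z_i))}$, which the elementary inequality $\sqrt{x}+\sqrt{y}\le\sqrt{2(x+y)}$ collapses to $2\sigma\sqrt{2D_{\KL}(P'_Z\|R_Z^{\alpha_1})+I(W;Z_i)}$, giving the second sum in \eqref{eq:avgbound}. For a source index I would instead use the weighted Cauchy--Schwarz inequality $\sqrt{x}+\sqrt{y}\le\sqrt{\hat\alpha_2(\alpha_2 x+(1-\alpha_2)y)}$ with $\hat\alpha_2=1/\alpha_2+1/(1-\alpha_2)$, taking $x=D_{\KL}(P'_Z\|R_Z^{\alpha_1})$ and $y=D_{\KL}(P_Z\|R_Z^{\alpha_1})+I(W;Z_i)$; by definition \eqref{eq:alpha1alpha2JSdivergence} the convex combination $\alpha_2 x+(1-\alpha_2)y$ is exactly $D_{\JS}^{\alpha_1,\alpha_2}(P'_Z\|P_Z)+(1-\alpha_2)I(W;Z_i)$, which yields the first sum in \eqref{eq:avgbound}. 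Averaging over the two index ranges and weighting by $\gamma$ and $1-\gamma$ finishes the proof; the hypothesis $(\beta,\alpha_2)\in(0,1)$ is precisely what makes $\beta M$, $(1-\beta)M$ and $\hat\alpha_2$ well defined.

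I expect the main obstacle to be careful bookkeeping rather than any single hard estimate: one has to keep straight that the reference measure $P_W R_Z^{\alpha_1}$ in which sub-Gaussianity is assumed is distinct both from the ``test'' law $P_W\times P'_Z$ and from the per-sample joints $P_{W,Z_i}$ (whose $Z_i$-marginals differ between the source and target blocks), and one must choose the matching weights in the final square-root combination --- equal weights for the target block, the pair $(\alpha_2,1-\alpha_2)$ for the source block --- so that the $(\alpha_1,\alpha_2)$-JS divergence appears with precisely the constant $\sqrt{2\hat\alpha_2}$. It is also worth noting up front which absolute-continuity conditions (namely $P'_Z\ll R_Z^{\alpha_1}$ and $P_Z\ll R_Z^{\alpha_1}$, both automatic for $\alpha_1\in(0,1)$) the KL terms implicitly require, the bound being vacuously true otherwise.
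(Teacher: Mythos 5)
Your proposal is correct and arrives at exactly the bound \eqref{eq:avgbound}, with the same skeleton as the paper: the decomposition \eqref{eq:decomposition}, the single-letter reduction to per-sample gaps, and the insertion of the mixture $P_W R_Z^{\alpha_1}$ as the pivot measure under which Assumption~\ref{assum:1} provides sub-Gaussianity. The difference is in the machinery used to control and recombine the two deviations around the pivot. The paper derives per-sample exponential inequalities (Lemma~\ref{lem:expinequality_avg}) by an explicit change of measure with log-likelihood-ratio and information-density correction terms, then couples the two deviations \emph{before} optimizing: it scales the tilting parameters asymmetrically as $\lambda_1/\alpha_2$ and $-\lambda_1/(1-\alpha_2)$, adds the resulting inequalities \eqref{ineq:1}--\eqref{ineq:2}, and optimizes over the single $\lambda_1$, which is how the weights $(\alpha_2,1-\alpha_2)$ and the constant $\sqrt{2\hat\alpha_2}$ appear. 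You instead invoke the Donsker--Varadhan (Xu--Raginsky-style) bound $|\Ebb_Q[f]-\Ebb_P[f]|\le\sqrt{2\sigma^2 D_{\KL}(Q\Vert P)}$ twice, with $Q=P_W\times P'_Z$ and $Q=P_{W,Z_i}$ (using the chain rule $D_{\KL}(P_{W,Z_i}\Vert P_W R_Z^{\alpha_1})=D_{\KL}(P_{Z_i}\Vert R_Z^{\alpha_1})+I(W;Z_i)$), and only then merge the two square roots by weighted Cauchy--Schwarz with weights $(\alpha_2,1-\alpha_2)$ for the source block and equal weights for the target block. The two routes are algebraically equivalent — your Cauchy--Schwarz merge reproduces exactly the outcome of the paper's joint optimization over $\lambda_1$ — so the constants coincide; your version is arguably more transparent about where $\hat\alpha_2$ comes from, while the paper's exponential-inequality lemma is the form needed for its extension to general bounded-CGF losses in Appendix~\ref{app:boundedCGF}, where the quadratic optimization no longer closes in a single square root and the separate-then-merge shortcut would not apply verbatim.
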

\begin{proof}
See Appendix~\ref{app:averagebound}.
\end{proof}
\vspace{-0.05cm}
%The upper bound in \eqref{eq:avgbound} leverages the individual sample mutual information (ISMI) approach in \cite{bu2019tightening} to bound the generalization gap. Accordingly, it depends on the sensitivity of the output of the learning algorithm $W$ to each individual data sample $Z_i$ of the training data set via the mutual information $I(W;Z_i)$.
The first term in \eqref{eq:avgbound} accounts for the contribution to the transfer generalization gap caused by the limited availability of the source-domain data. It comprises of $(i)$ the sensitivity measure  of the algorithm to the individual sample of the source-domain training set captured by the mutual information $I(W;Z_i)$; and $(ii)$ the domain shift between source and target data distributions captured by the $(\alpha_1,\alpha_2)$-JS-divergence $D^{\alpha_1,\alpha_2}_{\JS}(P'_Z||P_Z)$.
The second term of \eqref{eq:avgbound} similarly accounts for the contribution of the limited data from the target-domain. It comprises of the mutual information $I(W;Z_i)$ which accounts for the sensitivity of the learning algorithm to individual sample of the target-domain training set; and of  the KL divergence term $D_{\KL}(P'_Z||R^{\alpha_1}_Z)$, which quantify the distance between the target distribution $P'_Z$ and the mixture distribution $R^{\alpha_1}_Z$. 

We note that the KL divergence term $D_{\KL}(P'_Z||R^{\alpha_1}_Z)$ arises here since the sub-Gaussianity of the loss function $l(W,Z)$ is assumed under $(W,Z) \sim P_WR^{\alpha_1}_Z$ (Assumption~\ref{assum:1}). We also note that, for $\alpha_1<1$, we have $\supp(P'_Z) \subseteq \supp(R^{\alpha_1}_Z)$, and hence the KL divergence $D_{\KL}(P'_Z||R_Z^{\alpha_1})$ is well-defined.  Moreover, for fixed $\gamma$, $\beta$ and $M$, the upper bound in \eqref{eq:avgbound} can be tightened by optimizing over the choice of $\alpha_1$ and $\alpha_2$.  For instance, for the extreme case when $\gamma=0$, the bound in \eqref{eq:avgbound} is minimized by choosing $\alpha_1=\gamma=0$. 
%Moreover, if we let $\beta=0$, the transfer learning problem reduces to the conventional learning with training and testing data drawn from $P'_Z$. Consequently, we have $R_Z=P'_Z$ (and thus $D_{\KL}(P'_Z||R_Z)=0$) and Assumption~\ref{assum:1} yields the ISMI bound in \cite{bu2019tightening}. This case is of no interest to us in the setting of transfer learning and hence, we strictly consider $\beta \in (0,1]$.

Note that the bound in \eqref{eq:avgbound} does not account for the case $\beta=0$, \ie, when only target-domain data set is available for training. In this case, the problem reduces to the conventional learning with $P_Z=P'_Z$.
We now specialize the bound in \eqref{eq:avgbound} to the case when only data from source distribution is available for training, \ie, when $\beta=1$. 
\begin{corollary}
Under Assumption~\ref{assum:1}, the following bound holds when $\beta=1$, $\Delta L^{\avg} \leq $
\begin{align}
&\frac{\sigma\sqrt{2 \hat{\alpha_2}}}{M}\sum_{i=1}^{M} \sqrt{ D_{\JS}^{\alpha_1,\alpha_2}(P'_Z||P_Z)+(1-\alpha_2)I(W;Z_i)}.\label{eq:avgbound_onlysource}
\end{align}
\end{corollary}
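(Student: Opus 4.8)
The plan is to obtain \eqref{eq:avgbound_onlysource} by specializing the argument behind Theorem~\ref{thm:averagebound} to the boundary value $\beta = 1$, rather than literally substituting $\beta = 1$ into \eqref{eq:avgbound} (whose second summand is only defined for $\beta \in (0,1)$). First I would observe that when $\beta = 1$ the training set $Z^M = (Z_1,\dots,Z_M)$ consists entirely of i.i.d.\ source samples $Z^M \sim P_Z^M$, so there is no target data in $Z^M$ and the learner is forced to take $\gamma = 1$ in \eqref{eq:trainingloss}; then \eqref{eq:trainingloss} reads $L_t(w|Z^M) = \frac{1}{M}\sum_{i=1}^M l(w,Z_i)$, the decomposition \eqref{eq:decomposition} collapses to $\Delta L(w|Z^M) = L_g(w) - L_t(w|Z^M)$, and the ``target-domain'' term of \eqref{eq:decomposition} --- whose bound in the proof of Theorem~\ref{thm:averagebound} is the only place $\beta < 1$ is used --- drops out. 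It therefore suffices to re-run the bound on the first (``source-domain'') term of \eqref{eq:decomposition} with $\gamma = \beta = 1$.

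Next I would expand the average: writing $P_W$ for the marginal of $W$ under $P_{Z^M}P_{W|Z^M}$ and $P_{W,Z_i}$ for the joint marginal of $(W,Z_i)$, and using that the test sample defining $L_g(W)$ is drawn from $P'_Z$ independently of $Z^M$, so $\Ebb[L_g(W)] = \Ebb_{P_W \times P'_Z}[l(W,Z)]$, I get
\begin{align*}
\Delta L^{\avg} = \frac{1}{M}\sum_{i=1}^M \Big( \Ebb_{P_W \times P'_Z}[l(W,Z)] - \Ebb_{P_{W,Z_i}}[l(W,Z_i)] \Big) .
\end{align*}
For each fixed $i$ I would insert the mixture reference $R_Z^{\alpha_1}$ from \eqref{eq:mixturedistribution} and split the $i$-th summand as $\big(\Ebb_{P_W \times P'_Z}[l] - \Ebb_{P_W \times R_Z^{\alpha_1}}[l]\big) + \big(\Ebb_{P_W \times R_Z^{\alpha_1}}[l] - \Ebb_{P_{W,Z_i}}[l]\big)$. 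Applying the Donsker--Varadhan variational formula together with the $\sigma^2$-sub-Gaussianity of $l$ under $P_W R_Z^{\alpha_1}$ granted by Assumption~\ref{assum:1}: the first piece is at most $\sqrt{2\sigma^2 D_{\KL}(P'_Z\|R_Z^{\alpha_1})}$, since $D_{\KL}(P_W \times P'_Z \,\|\, P_W \times R_Z^{\alpha_1}) = D_{\KL}(P'_Z\|R_Z^{\alpha_1})$; and the chain rule $D_{\KL}(P_{W,Z_i}\,\|\,P_W \times R_Z^{\alpha_1}) = I(W;Z_i) + D_{\KL}(P_Z\|R_Z^{\alpha_1})$ (valid because $Z_i \sim P_Z$) makes the second piece at most $\sqrt{2\sigma^2\big(I(W;Z_i) + D_{\KL}(P_Z\|R_Z^{\alpha_1})\big)}$.

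Finally I would recombine the two pieces through the weighted Cauchy--Schwarz inequality $a + b \le \sqrt{a^2/(1-\alpha_2) + b^2/\alpha_2}$, valid for $\alpha_2 \in (0,1)$ and $a,b \ge 0$, taking $a^2 = 2\sigma^2 D_{\KL}(P'_Z\|R_Z^{\alpha_1})$ and $b^2 = 2\sigma^2\big(I(W;Z_i) + D_{\KL}(P_Z\|R_Z^{\alpha_1})\big)$; a one-line rearrangement using the definition \eqref{eq:alpha1alpha2JSdivergence} of $D_{\JS}^{\alpha_1,\alpha_2}$ and $\hat\alpha_2 = 1/\alpha_2 + 1/(1-\alpha_2)$ identifies $\frac{a^2}{1-\alpha_2} + \frac{b^2}{\alpha_2}$ with $2\sigma^2\hat\alpha_2\big(D_{\JS}^{\alpha_1,\alpha_2}(P'_Z\|P_Z) + (1-\alpha_2)I(W;Z_i)\big)$, and averaging over $i = 1,\dots,M$ gives \eqref{eq:avgbound_onlysource}. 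The step I expect to need the most care is the reduction in the first paragraph: one has to check that the source-term estimate of Theorem~\ref{thm:averagebound} never invoked $\beta < 1$ and that $\gamma = 1$ is forced (so that discarding the target term is legitimate rather than a choice); everything downstream is the same Donsker--Varadhan plus Cauchy--Schwarz computation already carried out for the general case.
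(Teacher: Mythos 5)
Your proof is correct, and it reaches exactly the stated bound, but the bookkeeping differs from the paper's in one interesting way. The paper (Appendix~\ref{app:averagebound}) proves the source-domain term of Theorem~\ref{thm:averagebound} by coupling the two change-of-measure exponential inequalities of Lemma~\ref{lem:expinequality_avg} through the substitutions $\lambda=\lambda_1/\alpha_2$ and $\lambda=-\lambda_1/(1-\alpha_2)$, applying Jensen, adding, and optimizing the single parameter $\lambda_1$; the factor $\hat\alpha_2$ and the $(\alpha_1,\alpha_2)$-JS divergence emerge directly from that joint optimization. You instead bound the two pieces $\Ebb_{P_W P'_Z}[l]-\Ebb_{P_W R_Z^{\alpha_1}}[l]$ and $\Ebb_{P_W R_Z^{\alpha_1}}[l]-\Ebb_{P_{W,Z_i}}[l]$ separately by the standard sub-Gaussian Donsker--Varadhan bound (each with its own optimal $\lambda$), and then recover the $\hat\alpha_2$-weighted JS form via the weighted Cauchy--Schwarz step $a+b\le\sqrt{a^2/(1-\alpha_2)+b^2/\alpha_2}$; your algebra identifying $a^2/(1-\alpha_2)+b^2/\alpha_2$ with $2\sigma^2\hat\alpha_2\bigl(D_{\JS}^{\alpha_1,\alpha_2}(P'_Z\|P_Z)+(1-\alpha_2)I(W;Z_i)\bigr)$ checks out, and the chain-rule decomposition $D_{\KL}(P_{W,Z_i}\|P_W\times R_Z^{\alpha_1})=I(W;Z_i)+D_{\KL}(P_Z\|R_Z^{\alpha_1})$ is the same ingredient the paper encodes through the $\log(P_{Z_i}/R_{Z_i}^{\alpha_1})$ and $\imath(W,Z_i)$ terms of \eqref{eq:expinequality-2}. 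The two routes are equivalent in substance (both are change of measure plus sub-Gaussian CGF control), but yours makes explicit that the intermediate sum of two square roots is actually a slightly tighter bound, which is then deliberately loosened by Cauchy--Schwarz to match the stated form. Your opening point is also well taken: since Theorem~\ref{thm:averagebound} is stated for $\beta\in(0,1)$, the corollary is cleanest obtained, as you do, by noting that when $\beta=1$ the training set is all-source, $\gamma=1$ is forced in \eqref{eq:trainingloss}, the target term of \eqref{eq:decomposition} vanishes, and the source-term estimate never used $\beta<1$.
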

\vspace{-0.1cm}
The bound in \eqref{eq:avgbound} can be proven to hold also under the following assumption, similar to the one considered in \cite{xu2017information}.
\begin{assumption}\label{assum:2}
The loss function $l(w,Z)$ is $\sigma^2-$sub-Gaussian under $Z \sim R_Z^{\alpha_1}$ for all $w \in \Wscr$.
\end{assumption}

To see this, one can follow the steps in the derivation of the exponential inequalities in Lemma~\ref{lem:expinequality_avg} of Appendix~\ref{app:averagebound}, starting from the additional step of averaging both sides of the inequality $\Ebb_{R_Z^{\alpha_1}}[\exp(\lambda(l(w,Z)-\Ebb_{R_Z^{\alpha_1}}[l(w,Z)])-\lambda^2\sigma^2/2)]\leq 1$ over $W \sim P_W$. As discussed in \cite{bu2019tightening}, in general, Assumption~\ref{assum:1} does not imply this assumption, and vice versa. However, both assumptions hold when $l(\cdot,\cdot)$ is bounded. 

We finally note that the $(\alpha_1,\alpha_2)$-JS-divergence-based bounds on average transfer generalization gap can be generalized to loss functions $l(W,Z)$ whose CGF is upper bounded by a function $\Psi(\lambda)$ for $\lambda \in [b_{-},b_{+}]$ under $(W,Z)\sim P_W R^{\alpha_1}_Z$. See Appendix~\ref{app:boundedCGF} for details.
%\begin{corollary}\label{cor:general}
%Assume that the CGF of the loss function $l(W,Z)\sim P_W R^{\alpha_1}_Z$ is upper bounded by a function $\Psi(\lambda)$ for $\lambda \in [b_{-},b_{+}]$. Then, for $\beta \in (0,1)$, the following  bound holds
%\begin{align}
%&\Delta L^{\avg} \leq \frac{\gamma}{\beta M}\sum_{i=1}^{\beta M} \inf_{\lambda_1 \in (0,b)}\biggl(\widehat{\Psi}(\lambda_1)+ 2D_{\JS}^{\alpha_1}(P'_Z||P_Z)\non \\&+I(W;Z_i) \biggr) +\frac{1-\gamma}{(1-\beta)M}\sum_{i=\beta M+1}^M  \inf_{\lambda_2 \in (0,b)}\biggl(\widehat{\Psi}(\lambda_2)\non \\&+2D_{\KL}(P'_Z||R_Z^{\alpha_1})+I(W;Z_i) \biggr), \label{eq:avgbound_generalloss}
%\end{align}
%where $\widehat{\Psi}(\lambda)=\Psi(\lambda)+\Psi(-\lambda)$ and $b=\min\{b_{+},-b_{-}\}$.
%\end{corollary}
This class of functions include the sub-Gaussian loss function $l(W,Z)$ in Assumption~\ref{assum:1} with $\Psi(\lambda)=\Psi(-\lambda)=\lambda^2 \sigma^2/2$ and $b_{+}=b_{-}=\infty$, and the sub-gamma loss $l(W,Z)$ with variance parameter $\sigma$ and scale parameter $c$, whose CGF is upper bounded by $\Psi(\lambda)= \lambda^2 \sigma^2/2(1-c|\lambda|)$ for $|\lambda|<1/c$.

%The $\alpha_1$-JS divergence in \eqref{eq:JSdivergence} is also referred to as
%, then $\lambda^{*}=\frac{\sqrt{2D_{\JS}(P'_Z||P_Z)+I(W;Z_i)}}{c\sqrt{2D_{\JS}(P'_Z||P_Z)+I(W;Z_i)}+\sigma}$ optimizes the first term in \eqref{eq:avgbound_generalloss}.
\subsection{Bound on Average Transfer Excess  Risk for EWRM}
In this section, we obtain an upper bound on the average transfer excess risk of  EWRM. Let
\begin{align}
w^{*}=\arg \min_{w \in \Wscr} L_g(w)
\end{align} be the optimizing model parameter of the transfer generalization loss $L_g(w)$. Then, the average transfer excess risk for the EWRM algorithm is defined as
\begin{align}
\Delta L_g^{*}=\Ebb_{P_{Z^M}}[L_g(W^{\EWRM})]-L_g(w^{*}),
\end{align}where we have used $W^{\EWRM}$ to denote $W^{\EWRM}(Z^M)$ for notational convenience.

To obtain an upper bound on the  average excess risk $\Delta L_g^{*}$, we use the decomposition
\begin{align}
&\Delta L_g^{*}=\underbrace{\Ebb_{P_{Z^M}}[L_g(W^{\EWRM})]-\Ebb_{P_{Z^M}}[L_t(W^{\EWRM}|Z^M)]}_{A} \non \\&+ \underbrace{\Ebb_{P_{Z^M}}[L_t(W^{\EWRM}|Z^M)]-L_g(w^{*})}_{B}. \label{eq:excessrisk_defn}
\end{align} Term A in \eqref{eq:excessrisk_defn} corresponds to the average transfer generalization gap for the EWRM, and hence it can be upper bounded using \eqref{eq:avgbound}. Using the definition \eqref{eq:EWRM} of EWRM, term B can be upper bounded as
\begin{align}
B &\leq \Ebb_{P_{Z^M}}[L_t(w^{*}|Z^M)]-L_g(w^{*})\non \\ 
%&= \gamma \biggl[\Ebb_{P_{Z^{\beta M}}}[L_t(w^{*}|Z^{\beta M})]-L_g(w^{*})\biggr]\non \\&+(1-\gamma)\biggl[\Ebb_{P_{Z^{(1-\beta) M}}}[L_t(w^{*}|Z^{(1-\beta) M})]-L_g(w^{*})\biggr]\non \\
&=\gamma \biggl[ \Ebb_{P_Z}[l(w^{*},Z)]-\Ebb_{P'_Z}[l(w^{*},Z)]\biggr],
\end{align}
where the last equality follows from \eqref{eq:trainingloss} and using  the identity $\Ebb_{P_{Z^{(1-\beta) M}}}[L_t(w^{*}|Z^{(1-\beta) M})]=L_g(w^{*})$.
Denoting the upper bound on term A which follows from  \eqref{eq:avgbound} as ${\rm{UB}}(W^{\EWRM})=$
\begin{align*}
&\frac{\gamma\sigma\sqrt{2 \hat{\alpha_2}}}{\beta M}\sum_{i=1}^{\beta M} \sqrt{ D_{\JS}^{\alpha_1,\alpha_2}(P'_Z||P_Z)+(1-\alpha_2)I(W^{\EWRM};Z_i)}\non \\
&\hspace{-0.2cm}+\frac{2(1-\gamma)\sigma}{(1-\beta)M}\sum_{i=\beta M+1}^M \sqrt{2 D_{\KL}(P'_Z||R_Z^{\alpha_1})+I(W^{\EWRM};Z_i)},
\end{align*}
and combining this with an upper bound on term B yields the following bound on the average transfer excess risk for EWRM.
%\begin{theorem} \label{thm:IPM}
%Under Assumption~\ref{assum:2}, the following bound holds for $\beta \in (0,1]$
%\begin{align}
%&\Delta L_g^{*}\leq {\rm{UB}}(W^{\EWRM})+\gamma d_{\Wscr}(P_Z,P'_Z), \quad \mbox{where} \label{eq:excessrisk_bound1}
%\end{align}
% \begin{align}d_{\Wscr}(P_Z,P'_Z)=\sup_{w \in \Wscr}|\Ebb_{P_Z}[l(w,Z)]-\Ebb_{P'_Z}[l(w,Z)]|. \label{eq:IPM} \end{align}
%\end{theorem}
%
%
%In addition to the bound ${\rm{UB}}(W^{\EWRM})$ on the generalization gap, the excess-risk bound in \eqref{eq:excessrisk_bound1} includes the term $d_{\Wscr}(P_Z,P'_Z)$, which is the integral probability metric used in \cite{zhang2012generalization} to account for the effect of domain shift on the generalization loss. Since this term is challenging to evaluate, in the following, we show that an easier-to-estimate upper bound can be obtained  by using the $\alpha_1$-JS divergence.
% which is challenging to evaluate. In the following, we show that for bounded loss functions, term B in \eqref{eq:excessrisk_defn} can be upper bounded by the JS Divergence.
\begin{theorem}\label{thm:excessrisk_JS}
Under Assumption~\ref{assum:2}, the following bound holds for $\beta \in (0,1]$
\begin{align} 
&\Delta L_g^{*}\leq  {\rm{UB}}(W^{\EWRM})+\gamma \sqrt{2\sigma^2\hat{\alpha_2}D^{\alpha_1,\alpha_2}_{\JS}(P'_Z||P_Z)}\label{eq:excessrisk_JS},
\end{align}
where $\hat\alpha_2=1/\alpha_2 +1/(1-\alpha_2)$.
\end{theorem}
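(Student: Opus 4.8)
The plan is to start from the decomposition \eqref{eq:excessrisk_defn} of $\Delta L_g^{*}$ into term $A$ (the average transfer generalization gap of $W^{\EWRM}$) and term $B$, and to bound the two separately. For term $A$, note that Assumption~\ref{assum:2} is precisely the hypothesis under which the bound \eqref{eq:avgbound} of Theorem~\ref{thm:averagebound} remains valid (via the averaging step described just after Assumption~\ref{assum:2}); instantiating \eqref{eq:avgbound} with $W=W^{\EWRM}$ therefore gives $A\le {\rm{UB}}(W^{\EWRM})$. All the remaining work lies in term $B$.

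For term $B$, I would invoke the defining optimality of EWRM, \eqref{eq:EWRM}, namely $L_t(W^{\EWRM}|Z^M)\le L_t(w^{*}|Z^M)$ for every realization of $Z^M$, to obtain $B\le\Ebb_{P_{Z^M}}[L_t(w^{*}|Z^M)]-L_g(w^{*})$. Because $w^{*}$ is deterministic (independent of $Z^M$), taking the expectation of \eqref{eq:trainingloss} and using the identities $\Ebb_{P_{Z^{\beta M}}}[L_t(w^{*}|Z^{\beta M})]=\Ebb_{P_Z}[l(w^{*},Z)]$ and $\Ebb_{P_{Z^{(1-\beta)M}}}[L_t(w^{*}|Z^{(1-\beta)M})]=\Ebb_{P'_Z}[l(w^{*},Z)]=L_g(w^{*})$ collapses term $B$ to $\gamma\big(\Ebb_{P_Z}[l(w^{*},Z)]-\Ebb_{P'_Z}[l(w^{*},Z)]\big)$, as recorded in the text preceding the theorem.

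The heart of the proof is to bound this single-hypothesis deviation by the $(\alpha_1,\alpha_2)$-JS divergence. I would route through the mixture mean $\mu:=\Ebb_{R_Z^{\alpha_1}}[l(w^{*},Z)]$, splitting the deviation as $\big(\Ebb_{P_Z}[l(w^{*},Z)]-\mu\big)+\big(\mu-\Ebb_{P'_Z}[l(w^{*},Z)]\big)$. Under Assumption~\ref{assum:2}, $z\mapsto l(w^{*},z)$ is $\sigma^2$-sub-Gaussian under $R_Z^{\alpha_1}$, and so is $-l(w^{*},\cdot)$; hence the Donsker--Varadhan change-of-measure argument (optimizing $\lambda\big(\Ebb_{P}[X]-\Ebb_{R_Z^{\alpha_1}}[X]\big)\le D_{\KL}(P||R_Z^{\alpha_1})+\lambda^2\sigma^2/2$ over $\lambda>0$), applied with $P=P_Z$ to the first term and $P=P'_Z$ to the second, yields
\begin{align*}
\Ebb_{P_Z}[l(w^{*},Z)]-\Ebb_{P'_Z}[l(w^{*},Z)] &\le \sqrt{2\sigma^2 D_{\KL}(P_Z||R_Z^{\alpha_1})}\\
&\quad+\sqrt{2\sigma^2 D_{\KL}(P'_Z||R_Z^{\alpha_1})}.
\end{align*}
(When $\alpha_1=0$ and $P_Z\not\ll P'_Z$, both sides of the target inequality are $+\infty$ and the claim is trivial, so we may take all quantities finite.) Applying Cauchy--Schwarz with weights $1/\alpha_2$ and $1/(1-\alpha_2)$, i.e.\ $\sqrt{x}+\sqrt{y}=\tfrac{1}{\sqrt{\alpha_2}}\sqrt{\alpha_2 x}+\tfrac{1}{\sqrt{1-\alpha_2}}\sqrt{(1-\alpha_2)y}\le\sqrt{\hat\alpha_2}\sqrt{\alpha_2 x+(1-\alpha_2)y}$, followed by the definition \eqref{eq:alpha1alpha2JSdivergence}, gives
\begin{align*}
&\sqrt{2\sigma^2 D_{\KL}(P'_Z||R_Z^{\alpha_1})}+\sqrt{2\sigma^2 D_{\KL}(P_Z||R_Z^{\alpha_1})}\\
&\qquad\le\sqrt{2\sigma^2\hat\alpha_2\,D_{\JS}^{\alpha_1,\alpha_2}(P'_Z||P_Z)}.
\end{align*}
Plugging this into $B$ and combining with $A\le{\rm{UB}}(W^{\EWRM})$ yields \eqref{eq:excessrisk_JS}.

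I expect the middle step to be the only delicate one: since sub-Gaussianity is postulated with respect to the mixture $R_Z^{\alpha_1}$ rather than $P_Z$ or $P'_Z$, the deviation must be split symmetrically around $\mu$ before change of measure can be invoked, after which turning the resulting sum of two KL square-roots into the $\hat\alpha_2$-weighted JS divergence is a one-line Cauchy--Schwarz. The term-$A$ bound and the EWRM optimality step for term $B$ are direct applications of results and definitions already established in the paper.
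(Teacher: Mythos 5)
Your proposal is correct and follows the paper's own route: the same decomposition into term $A$ (bounded by ${\rm UB}(W^{\EWRM})$ via Theorem~\ref{thm:averagebound} under Assumption~\ref{assum:2}) and term $B$ (collapsed to $\gamma(\Ebb_{P_Z}[l(w^{*},Z)]-\Ebb_{P'_Z}[l(w^{*},Z)])$ by EWRM optimality), followed by a change of measure from the mixture $R_Z^{\alpha_1}$ to $P_Z$ and to $P'_Z$ using the sub-Gaussian CGF. The only difference is cosmetic: the paper couples the two change-of-measure inequalities by choosing $\lambda=\lambda_1/\alpha_2$ and $\lambda=-\lambda_1/(1-\alpha_2)$ and jointly optimizing the single $\lambda_1$, whereas you optimize each deviation separately to get $\sqrt{2\sigma^2 D_{\KL}(P_Z\|R_Z^{\alpha_1})}+\sqrt{2\sigma^2 D_{\KL}(P'_Z\|R_Z^{\alpha_1})}$ and then merge them with a weighted Cauchy--Schwarz step; both yield exactly $\gamma\sqrt{2\sigma^2\hat\alpha_2 D_{\JS}^{\alpha_1,\alpha_2}(P'_Z\|P_Z)}$, and your weighting ($\alpha_2$ on $D_{\KL}(P'_Z\|R_Z^{\alpha_1})$, $1-\alpha_2$ on $D_{\KL}(P_Z\|R_Z^{\alpha_1})$) matches the definition \eqref{eq:alpha1alpha2JSdivergence} precisely.
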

%We refer to \cite{jose2020informationtheoretic} for proof.
\begin{proof}
See Appendix~\ref{app:excessrisk_JS}.
\end{proof}
%\subsection{Generalizations of Theorem~\ref{thm:averagebound}}
%In this section, we discuss two generalizations of the bound in Theorem~\ref{thm:excessrisk_JS}. The first bound generalizes the JS-divergence to a family of JS divergence parameterized by $\alpha_1 \in [0,1]$, called the $\alpha_1$-JS diveregence \cite{nielsen2010family}, defined as
%\begin{align}
%D_{\JS}^{\alpha_1}(P'_Z||P_Z)=0.5(D_{\KL}(P_Z||R^{\alpha_1}_Z)+D_{\KL}(P'_Z||R^{\alpha_1}_Z)),
%\end{align}for some $\alpha_1 \in [0,1]$ where $R^{\alpha_1}_Z=\alpha_1 P_Z+(1-\alpha_1)P'_Z$. In particular, $D_{\JS}^{0.5}(P'_Z||P_Z)=D_{\JS}(P'_Z||P_Z)$. This results in the following bound.
%\begin{corollary}
%Consider Assumption~\ref{assum:1} to hold under $R^{\alpha_1}_Z$ for $\alpha_1 \in [0,1]$. Then, for $\beta \in (0,1)$, the following bound on the average transfer generalization gap holds for $\alpha_1 \in [0,1]$
%\begin{align}
%&\Ebb_{P_{Z^M}P_{W|Z^M}}[\Delta L(W|Z^M)]\non \\& \leq \frac{\gamma}{\beta M}\sum_{i=1}^{\beta M}2 \sqrt{\sigma^2\biggl(2 D^{\alpha_1}_{\JS}(P'_Z||P_Z)+I(W;Z_i)\biggr)}\non \\
%&+\frac{1-\gamma}{(1-\beta)M}\sum_{i=\beta M+1}^M 2\sqrt{\sigma^2 \biggl(2 D_{\KL}(P'_Z||R^{\alpha_1}_Z)+I(W;Z_i) \biggr) }. \label{eq:avgbound_pi}
%\end{align}
%\end{corollary}
%Note that the bound in \eqref{eq:avgbound_pi} holds for all $\alpha_1 \in [0,1]$. One can optimize the right hand side of \eqref{eq:avgbound_pi} for the optimal choice of $\alpha_1$. Intuitively, since $\gamma$ controls the impact of the source task data set
\section{Example}
In this section, we consider the problem of estimating the mean of a discrete random variable $Z$ taking values in set $\Zscr=\{0,1,2\}$. The source domain is defined by data distributed as $Z\sim P_Z$, with $P_Z(0)=p_s$ and $P_Z(1)=1-p_s$, and the target-domain data is distributed as $Z \sim P'_Z$, with $P'_Z(1)=p_t$ and $P'_Z(2)=1-p_t$.
The transfer learner infers an estimate $w \in \Wscr$ of the mean of the random variable $Z$. The loss function $l(w,z)=(w-z)^2$ measures the quadratic error between the estimate $w$ and a test input $z$.
For a training data set $Z^M$, the EWRM transfer learner in \eqref{eq:EWRM} outputs the estimate 
\begin{align}
W^{\EWRM}= \frac{\gamma}{\beta M} \sum_{i=1}^{\beta M}Z_i + \frac{(1-\gamma)}{(1-\beta) M} \sum_{i=\beta M+1}^{ M}Z_i. \label{eq:EWRM_example}
\end{align}

%The average generalization loss for EWRM evaluates to
%\begin{align}
%&\Ebb_{P_{Z^M}P_{W|Z^M}}[L_g(W^{\EWRM})]=\Ebb_{P_{Z^M}}[(W^{\EWRM})^2]\non \\&-2\mu_t\Ebb_{P_{Z^M}}[W^{\EWRM}]+\nu_t+\mu_t^2
%\end{align} 
The average transfer generalization gap evaluates to
\begin{align}
&\Ebb_{P_{Z^M}}[\Delta L(W^{\EWRM}|Z^M)]=2\Ebb_{P_{Z^M}}[(W^{\EWRM})^2]\non \\&-2\mu_t\Ebb_{P_{Z^M}}[W^{\EWRM}]+\gamma(\nu_t+\mu_t^2-\nu_s-\mu_s^2), \label{eq:gap_example}
\end{align} 
where $\mu_t$ and $\nu_t$ are the mean and variance respectively of the random variable $Z \sim P'_Z$; while $\mu_s$, and $\nu_s$ are the mean and variance respectively of the random variable $Z \sim P_Z$. The averages in \eqref{eq:gap_example} can be computed explicitly as $\Ebb_{P_{Z^M}}[W^{\EWRM}]=\gamma \mu_s+(1-\gamma)\mu_t$, and $\Ebb_{P_{Z^M}}[(W^{\EWRM})^2]=\gamma^2 \nu_s/(\beta M)+(1-\gamma)^2\nu_t/((1-\beta)M)+(\Ebb_{P_{Z^M}}[W^{\EWRM}])^2$.

Since the support of the target-domain data distribution does not include the support of the source-domain data distribution, the KL divergence evaluates to $D(P_Z||P'_Z)=\infty$. In contrast, the $(\alpha_1,\alpha_2)$-JS divergence can be evaluated in closed form.
%as 
%{\color{red}\begin{align}
%&D_{\JS}^{\alpha_1,\alpha_2}(P'_Z||P_Z)=\alpha_2\biggl(-p_s \log(\alpha_1)+(1-p_s)\times \non \\& \log \frac{1-p_s}{\alpha_1(1-p_s)+(1-\alpha_1)p_t}\biggr)+(1-\alpha_2)\biggl(-(1-p_t) \log(1-\alpha_1)\non \\&+p_t \log \frac{p_t}{\alpha_1(1-p_s)+(1-\alpha_1) p_t}\biggr).
%%0.5\biggl((1-p_t+p_s)\log 2 \non \\&+(1-p_s)\log \frac{2(1-p_s)}{1-p_s+p_t}+p_t \log \frac{2p_t}{1-p_s+p_t}\biggr).
%\end{align}}
 Furthermore, using \eqref{eq:EWRM_example} and the alphabet $\Zscr \in \{0,1,2\}$, we can, without loss of generality, consider the model parameter space $\Wscr$ limited to the interval $[0,2]$. Therefore, the loss function $l(w,z)$ is bounded in the interval $[0,4]$, and hence it is $4$-sub-Gaussian.
\begin{figure}[h!]
 \centering 
   \includegraphics[scale=0.5,trim=3.2in 1.2in 2.5in 1.5in,clip=true]{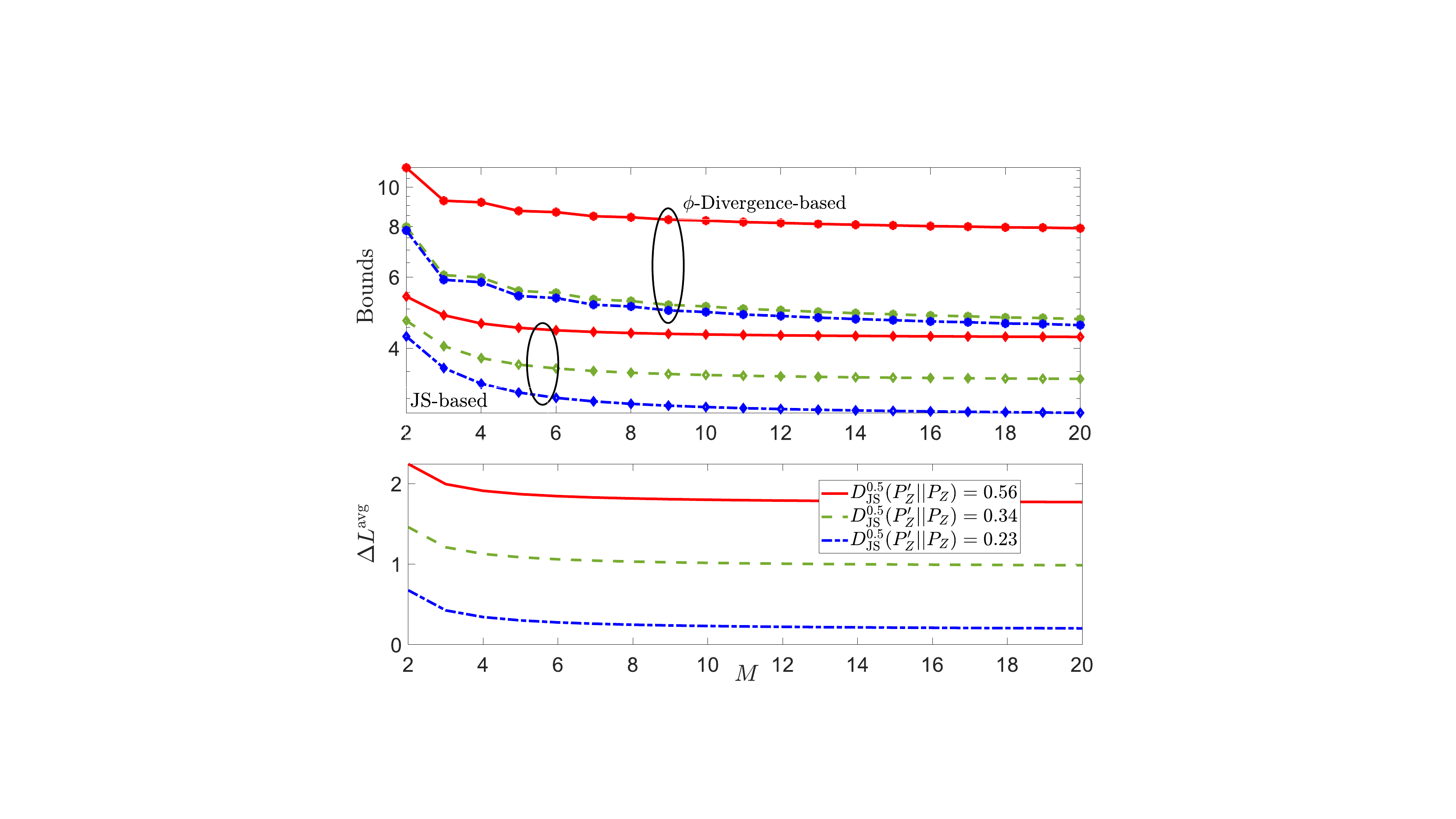} 
  \caption{Average transfer generalization gap \eqref{eq:gap_example} (bottom) and the $(0.5,0.5)$-JS-based bound in \eqref{eq:avgbound_onlysource}  and $\phi$-divergence based bound in \cite[Cor.~3]{wu2020information} (top) as a function of $M$ (when $\beta=1$) for varying JS divergence between $P'_Z$ and a fixed $P_Z$ with $p_s=0.48$.} \label{fig:exp3_fig1}
  \vspace{-0.3cm}
   \end{figure}

In Figure~\ref{fig:exp3_fig1}, we compare the the average transfer generalization gap \eqref{eq:gap_example} with the  conventional JS-divergence bound of \eqref{eq:avgbound_onlysource} for $\alpha_1=\alpha_2=0.5$ and the $\phi$-divergence based bound in \cite[Cor.~3]{wu2020information} with $\phi(x)=|x-1|$,  for the case when $\beta=1$ (\ie, only source-domain data set available for training) as a function of  increasing values of $M$. For fixed $P_Z$ with $p_s=0.48$, we vary the JS-divergence by varying $p_t$. As predicted by our bound, the transfer generalization gap decreases with increase in the number of source-data samples $M$ available for training. However, there exists a non-vanishing generalization gap even at high $M$, which is a direct consequence of the domain shift. Moreover, a larger JS-divergence between $P_Z$ and $P'_Z$ is predictive of a larger average transfer generalization gap. Finally, we show that JS-divergence based bounds outperform the  $\phi$-divergence based bound in \cite[Cor.~3]{wu2020information} when $\beta=1$ at varying JS distances. 
%However, we note that when $\beta<1$, the $\alpha_1$-divergence based bounds need not necessarily outperform the $\phi$-divergence based bounds.

 We now study the advantage of considering the general family of $(\alpha_1,\alpha_2)$-JS divergence over the JS divergence. Since the loss function is bounded, Assumption~\ref{assum:1} holds for mixture distribution $R^{\alpha_1}_Z$ for any $\alpha_1 \in [0,1]$. Consequently, the bound in \eqref{eq:avgbound} can be tightened by optimizing over $\alpha_1$ and $\alpha_2$.
\begin{figure}[h!]
 \centering 
   \includegraphics[scale=0.48,trim=3.1in 1.55in 2.5in 1.85in,clip=true]{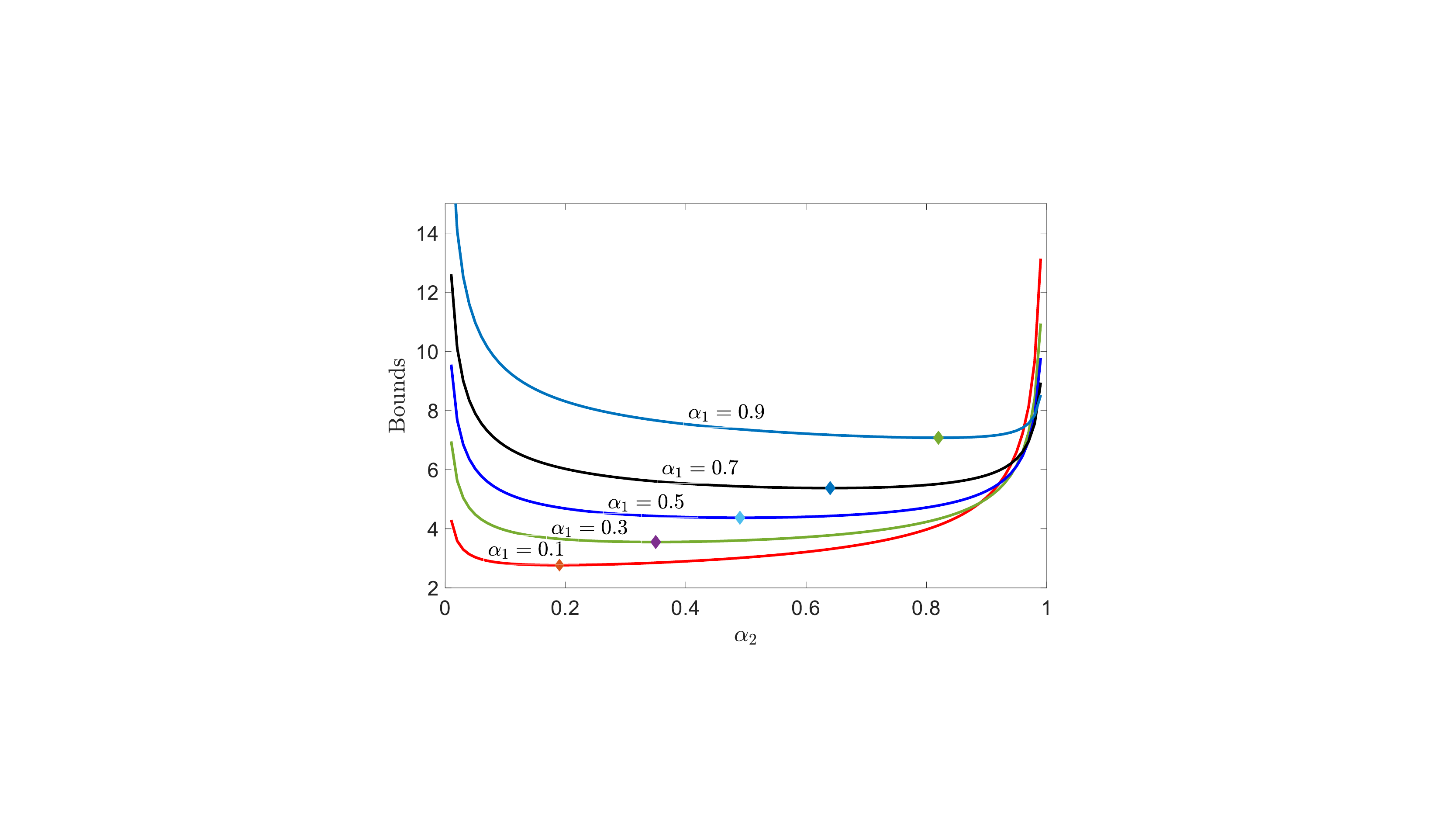} 
   \caption{The $(\alpha_1,\alpha_2)$-JS divergence based bound in \eqref{eq:avgbound} as a function of $\alpha_2$ for varying $\alpha_1$ when $M=30$, $\beta=2/3$ and $\gamma=0.3$. } \label{fig:exp5_fig1}
   \vspace{-0.3cm}
  \end{figure}  
In Figure~\ref{fig:exp5_fig1}, we evaluate the tightness of the bound in \eqref{eq:avgbound} as a function of $\alpha_2$ for varying values of $\alpha_1$. As can be seen, the choice $(\alpha_1=0.1$, $\alpha_2 \approx 0.2)$  yields the tightest bound. Therefore, the optimizing choice of $(\alpha_1,\alpha_2)$ does not result in existing JS divergences which assume $\alpha_2=0.5$ (symmetric skew JS divergence) or $\alpha_2=\alpha_1$ (asymmetric skew JS divergence).
   \appendices
\section{Proof of Theorem~\ref{thm:averagebound} }\label{app:averagebound}
To obtain an upper bound on $\Delta L^{\avg}$, we use the decomposition \eqref{eq:decomposition} and separately bound the two differences. The average of the first difference in \eqref{eq:decomposition} can be equivalently written as $\Ebb_{P_{Z^M,W}}[L_g(W)-L_t(W|Z^{\beta M})]=$
\begin{align}
&\frac{1}{\beta M}\sum_{i=1}^{\beta M}\biggl[\Ebb_{P_W P'_{Z_i}}[l(W,Z_i)]-\Ebb_{P_{Z_i}P_{W|Z_i}}[l(W,Z_i)] \biggr] \label{eq:firstdiff}
\end{align} 
and similarly $\Ebb_{P_{Z^M,W}}[L_g(W)-L_t(W|Z^{(1-\beta) M})]=$
\begin{align}
&\frac{\sum_{i=\beta M+1}^{M}\biggl[\Ebb_{P_W P'_{Z_i}}[l(W,Z_i)]-\Ebb_{P'_{Z_i}P_{W|Z_i}}[l(W,Z_i)] \biggr]}{(1-\beta) M} \label{eq:secdiff}.
\end{align}
We first bound the difference $\Ebb_{P_W P'_{Z_i}}[l(W,Z_i)]-\Ebb_{P_{Z_i}P_{W|Z_i}}[l(W,Z_i)]$ in \eqref{eq:firstdiff}. Towards this, we use the exponential inequalities in Lemma~\ref{lem:expinequality_avg} obtained based on the change of measure approach adopted in \cite{hellstrom2020generalization}. Fix $\lambda=\lambda_1/\alpha_2$ for some $\lambda_1>0$ in \eqref{eq:expinequality-1} and $\lambda=-\lambda_1/(1-\alpha_2)$ in \eqref{eq:expinequality-2}, and apply Jensen's inequality yield the following inequalities
\begin{align}
&\Ebb_{P_W P'_{Z_i}}[l(W,Z_i)]-\Ebb_{P_W R^{\alpha_1}_Z}[l(W,Z)]\leq  \frac{\lambda_1 \sigma^2}{2\alpha_2}\non \\& \quad+\frac{\alpha_2 D_{\KL}(P'_Z||R^{\alpha_1}_Z)}{\lambda_1}  \label{ineq:1}
\\
&\Ebb_{P_W R_Z^{\alpha_1}}[l(W,Z)]-\Ebb_{P_{Z_i}P_{W|Z_i}}[l(W,Z_i)] \leq \frac{\lambda_1 \sigma^2}{2 (1-\alpha_2)} \non \\
 &\hspace{1cm}+\frac{1-\alpha_2}{\lambda_1}\biggl(D_{\KL}(P_Z||R^{\alpha_1}_Z)+I(W;Z_i) \biggr)
 .\label{ineq:2}
\end{align}
Adding \eqref{ineq:1} and \eqref{ineq:2} and choosing $$\lambda_1=\sqrt{\frac{D_{\JS}^{\alpha_1,\alpha_2}(P'_Z||P_Z)+(1-\alpha_1)I(W;Z_i)}{\frac{\sigma^2}{2}\Bigl(\frac{1}{\alpha_2}+\frac{1}{1-\alpha_2}\Bigr)}}$$ gives that $\Ebb_{P_W P'_{Z_i}}[l(W,Z_i)]- \Ebb_{P_{Z_i}P_{W|Z_i}}[l(W,Z_i)] \leq$
$$ \sqrt{2\sigma^2\biggl(\frac{1}{\alpha_2}+\frac{1}{1-\alpha_2}\biggr)\biggl( D_{\JS}^{\alpha_1,\alpha_2}(P'_Z||P_Z)+(1-\alpha_2)I(W;Z_i)\biggr)}.
$$
%Here, note that $D_{\JS}^{\alpha_1}(P'_Z||P_Z)=\alpha_1 D_{\KL}(P_Z||R^{\alpha_1}_Z)+(1-\alpha_1) D_{\KL}(P'_Z||R^{\alpha_1}_Z)$.
Similarly, we can bound the difference $\Ebb_{P_W P'_{Z_i}}[l(W,Z_i)]-\Ebb_{P'_{Z_i}P_{W|Z_i}}[l(W,Z_i)]$ in \eqref{eq:secdiff} by fixing $ \lambda=\lambda_1 >0$ in \eqref{eq:expinequality-1} and $\lambda=-\lambda_1$ in \eqref{eq:expinequality-3}. Applying Jensen's inequality on both bounds, adding the resultant inequalities  and choosing $\lambda_1=\sqrt{2D_{\KL}(P'_Z||R^{\alpha_1}_Z)+I(W;Z_i)}/\sigma$ gives the corresponding bound.
%\begin{align}
%\Ebb_{P_W R_Z}[l(W,Z)]&-\Ebb_{P'_{Z_i}P_{W|Z_i}}[l(W,Z_i)] \leq \frac{\lambda_1 \sigma^2}{2} \non \\ &+\frac{1}{\lambda_1}\biggl(D_{\KL}(P'_Z||R_Z)+I(W;Z_i) \biggr).\label{ineq:3}
%\end{align} Adding \eqref{ineq:1} and \eqref{ineq:3} and optimizing over $\lambda_1>0$ then yield that
%\begin{align}
%&\Ebb_{P_W P'_{Z_i}}[l(W,Z_i)]-\Ebb_{P'_{Z_i}P_{W|Z_i}}[l(W,Z_i)] \non \\& \leq 2\sqrt{\sigma^2 \biggl(2 D_{\KL}(P'_Z||R_Z)+I(W;Z_i) \biggr) }.
%\end{align}
\begin{lemma}\label{lem:expinequality_avg}
Under Assumption~\ref{assum:1}, the following inequalities hold for all $\lambda \in \Real$ when $i=1,\hdots,\beta M$,

\begin{align}
&\Ebb_{P_W P'_{Z_i}}\biggl[ \exp \biggl(\lambda(l(W,Z_i)-\Ebb_{P_W R_Z^{\alpha_1}}[l(W,Z)])-\frac{\lambda^2 \sigma^2}{2} \non \\& \qquad -\log \frac{P'_{Z_i}(Z_i)}{R_{Z_i}^{\alpha_1}(Z_i)}\biggr)\biggr] \leq 1, \label{eq:expinequality-1} \\
%\end{align}
%\begin{align}
&\Ebb_{P_{Z_i}P_{W|Z_i}}\biggl[ \exp \biggl(\lambda(l(W,Z_i)-\Ebb_{P_W R_Z^{\alpha_1}}[l(W,Z)])-\frac{\lambda^2 \sigma^2}{2} \non \\& \qquad -\log \frac{P_{Z_i}(Z_i)}{R_{Z_i}^{\alpha_1}(Z_i)}-\imath(W,Z_i)\biggr)\biggr] \leq 1, \label{eq:expinequality-2}
\end{align}
where $\imath(W,Z_i)=\log (P_{W,Z_i}(W,Z_i)/(P_WP_{Z_i}(W,Z_i))$ is the information density between random variables $W$ and $Z_i$.
For $i=\beta M+1,\hdots,M$, the inequality \eqref{eq:expinequality-1} holds along with the following inequality
\begin{align}
&\Ebb_{P'_{Z_i}P_{W|Z_i}}\biggl[ \exp \biggl(\lambda(l(W,Z_i)-\Ebb_{P_W R_Z^{\alpha_1}}[l(W,Z)])-\frac{\lambda^2 \sigma^2}{2} \non \\& \qquad -\log \frac{P'_{Z_i}(Z_i)}{R_{Z_i}^{\alpha_1}(Z_i)}-\imath(W,Z_i)\biggr)\biggr] \leq 1. \label{eq:expinequality-3}
\end{align}
%with the information density $i(W;Z_i)=\log (P'_{Z_i}(Z_i)P_{W|Z_i}(W|Z_i)/(P_W(W)P'_{Z_i}(Z_i))$.
\end{lemma}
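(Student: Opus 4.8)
The plan is to establish each of the three exponential inequalities by a change-of-measure argument combined with the sub-Gaussianity hypothesis in Assumption~\ref{assum:1}. The common starting point is that, under $(W,Z)\sim P_W R^{\alpha_1}_Z$, the loss $l(W,Z)$ is $\sigma^2$-sub-Gaussian, so that for every $\lambda\in\Real$
\begin{align}
\Ebb_{P_W R^{\alpha_1}_Z}\Bigl[\exp\Bigl(\lambda\bigl(l(W,Z)-\Ebb_{P_W R^{\alpha_1}_Z}[l(W,Z)]\bigr)-\tfrac{\lambda^2\sigma^2}{2}\Bigr)\Bigr]\le 1. \label{eq:pf-subg}
\end{align}
The task in each case is then to rewrite the expectation on the left-hand side of \eqref{eq:pf-subg}, which is taken under the product measure $P_W R^{\alpha_1}_{Z_i}$, as an expectation under the measure appearing in the target inequality (respectively $P_W P'_{Z_i}$, $P_{Z_i}P_{W|Z_i}$, or $P'_{Z_i}P_{W|Z_i}$), by inserting the appropriate Radon--Nikodym derivative inside the exponent.

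First I would prove \eqref{eq:expinequality-1}. Starting from \eqref{eq:pf-subg}, I multiply and divide the integrand by $P'_{Z_i}(Z_i)/R^{\alpha_1}_{Z_i}(Z_i)$; since $W$ and $Z_i$ are independent under $P_W R^{\alpha_1}_{Z_i}$ and $\alpha_1<1$ guarantees $\supp(P'_Z)\subseteq\supp(R^{\alpha_1}_Z)$ so this ratio is well-defined, the factor $R^{\alpha_1}_{Z_i}(Z_i)$ cancels and the expectation converts into one under $P_W P'_{Z_i}$. Writing $P'_{Z_i}(Z_i)/R^{\alpha_1}_{Z_i}(Z_i)=\exp\bigl(\log(P'_{Z_i}(Z_i)/R^{\alpha_1}_{Z_i}(Z_i))\bigr)$ and absorbing the reciprocal into the exponent produces exactly the $-\log(P'_{Z_i}(Z_i)/R^{\alpha_1}_{Z_i}(Z_i))$ term in \eqref{eq:expinequality-1}, with the $\le 1$ bound preserved. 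This argument is identical for $i\le\beta M$ and for $i>\beta M$, so \eqref{eq:expinequality-1} holds in both ranges.

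For \eqref{eq:expinequality-2} and \eqref{eq:expinequality-3} the idea is the same but the change of measure now acts on the joint law of $(W,Z_i)$. Here I rewrite the product $P_W R^{\alpha_1}_{Z_i}$ as the joint target measure (for \eqref{eq:expinequality-2} this is $P_{W|Z_i}P_{Z_i}$, for \eqref{eq:expinequality-3} it is $P_{W|Z_i}P'_{Z_i}$) times a likelihood ratio, which factorizes as
\begin{align}
\frac{P_W(W)\,R^{\alpha_1}_{Z_i}(Z_i)}{P_{W,Z_i}(W,Z_i)}
=\frac{P_W(W)P_{Z_i}(Z_i)}{P_{W,Z_i}(W,Z_i)}\cdot\frac{R^{\alpha_1}_{Z_i}(Z_i)}{P_{Z_i}(Z_i)}
=\exp\bigl(-\imath(W,Z_i)\bigr)\cdot\frac{R^{\alpha_1}_{Z_i}(Z_i)}{P_{Z_i}(Z_i)},
\end{align}
using $\imath(W,Z_i)=\log\bigl(P_{W,Z_i}(W,Z_i)/(P_W(W)P_{Z_i}(Z_i))\bigr)$ and, for the target-domain version, the analogous identity with $P_{Z_i}$ replaced by $P'_{Z_i}$ noting that the information density is defined with respect to the marginal that generates $Z_i$. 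Inserting this factor into \eqref{eq:pf-subg}, converting the expectation to the joint target measure, and moving the logarithm of the likelihood ratio into the exponent yields the $-\log(P_{Z_i}(Z_i)/R^{\alpha_1}_{Z_i}(Z_i))-\imath(W,Z_i)$ correction in \eqref{eq:expinequality-2} and its primed counterpart in \eqref{eq:expinequality-3}, again with the bound $\le 1$ intact.

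The main obstacle I anticipate is purely bookkeeping rather than conceptual: making sure the direction of each likelihood ratio is correct (so that the change of measure is into, not out of, the product measure under which sub-Gaussianity is assumed), that the support condition $\alpha_1<1$ is invoked exactly where a ratio with $R^{\alpha_1}_Z$ in the denominator is formed, and that the information density in the target-domain inequality \eqref{eq:expinequality-3} is consistently defined with respect to $P'_{Z_i}$ as the $Z_i$-marginal. Once the measure changes are set up correctly, each inequality follows immediately from \eqref{eq:pf-subg} with no further estimation.
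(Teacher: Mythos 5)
Your proposal is correct and follows essentially the same route as the paper: start from the sub-Gaussian exponential-moment bound under $P_W R^{\alpha_1}_Z$ and insert the appropriate Radon--Nikodym ratios (for \eqref{eq:expinequality-2}--\eqref{eq:expinequality-3} the joint ratio factorizing into the data-distribution ratio times $\exp(-\imath(W,Z_i))$) to change measure to $P_W P'_{Z_i}$, $P_{Z_i}P_{W|Z_i}$, or $P'_{Z_i}P_{W|Z_i}$. The only cosmetic difference is that the paper handles the support issue explicitly by first inserting an indicator of $\supp(P'_{Z_i})$ (resp.\ $\supp(P_{W,Z_i})$), exploiting nonnegativity of the integrand, and performs the joint change of measure in two steps rather than your single step; your appeal to $\supp(P'_Z)\subseteq\supp(R^{\alpha_1}_Z)$ plays the same role.
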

\begin{proof}
See Appendix~\ref{app:proof_Lemma}.
\end{proof}
\section{Proof of Lemma~\ref{lem:expinequality_avg}}\label{app:proof_Lemma}
The derivation of the exponential inequalities leverage the change of measure approach adopted in \cite{hellstrom2020generalization}. 
For $i=1,\hdots, M$, Assumption~\ref{assum:1} gives that
\begin{align}
\Ebb_{P_W R^{\alpha_1}_{Z_i}}\biggl[ \exp \biggl(\lambda(l(W,Z_i)-\Ebb_{P_W R_Z^{\alpha_1}}[l(W,Z)])-\frac{\lambda^2 \sigma^2}{2} \biggr)\biggr] \leq 1. \label{eq:subGaussian-1}
\end{align}
For $i=1,\hdots, \beta M$, the inequality \eqref{eq:subGaussian-1} implies the following inequality,
\begin{align}
\Ebb_{P_W R^{\alpha_1}_{Z_i}}\biggl[\Ibb_{\Escr} & \exp \biggl(\lambda(l(W,Z_i)-\Ebb_{P_W R_Z^{\alpha_1}}[l(W,Z)])-\frac{\lambda^2 \sigma^2}{2} \biggr)\biggr] \non \\&\leq 1, \label{eq:subGaussian-11}
\end{align}
where $\Escr= \supp(P'_{Z_i})$, and $\Ibb_{E}$ denotes the indicator function which takes value $1$ when the event $E$ is true, and is zero otherwise. Now, performing a change of measure from $R_{Z_i}^{\alpha_1}$ to $P'_{Z_i}$ as in \cite[Prop. 17]{polyanskiy2014lecture} yields \eqref{eq:expinequality-1}.

 To get to inequality \eqref{eq:expinequality-2}, we similarly first perform change of measure from $R^{\alpha_1}_{Z_i}$ to $P_{Z_i}$ on \eqref{eq:subGaussian-1} to get the inequality
 \begin{align}
 &\Ebb_{P_W P_{Z_i}}\biggl[ \exp \biggl(\lambda(l(W,Z_i)-\Ebb_{P_W R_Z^{\alpha_1}}[l(W,Z)])-\frac{\lambda^2 \sigma^2}{2} \non \\& \qquad -\log \frac{P_{Z_i}(Z_i)}{R_{Z_i}^{\alpha_1}(Z_i)}\biggr)\biggr] \leq 1. \label{eq:expinequality-12}
 \end{align}
 The inequality \eqref{eq:expinequality-12} implies the following
 \begin{align}
 &\Ebb_{P_W P_{Z_i}}\biggl[\Ibb_{\Escr_1} \exp \biggl(\lambda(l(W,Z_i)-\Ebb_{P_W R_Z^{\alpha_1}}[l(W,Z)])-\frac{\lambda^2 \sigma^2}{2} \non \\& \qquad -\log \frac{P_{Z_i}(Z_i)}{R_{Z_i}^{\alpha_1}(Z_i)}\biggr)\biggr] \leq 1, \label{eq:expinequality-123}
 \end{align} where $\Escr_1=\supp(P_{W,Z_i})$. Now performing a change of measure from
 $P_{Z_i}P_W$ to $P_{Z_i}P_{W|Z_i}$ as in \cite[Prop. 17]{polyanskiy2014lecture}, and using the definition of information density $\imath(W,Z_i)$ yields \eqref{eq:expinequality-2}.

For $i=\beta M+1,\hdots,M$, \eqref{eq:expinequality-1} can be verified to hold as before. To get to inequality \eqref{eq:expinequality-3}, we perform a change of measure on \eqref{eq:expinequality-1} from $ P'_{Z_i}P_W$ to $P'_{Z_i}P_{W|Z_i}$ as explained before.
\section{Proof of Theorem~\ref{thm:excessrisk_JS}}\label{app:excessrisk_JS}
To obtain the required bound in \eqref{eq:excessrisk_JS}, we show that the term $ \Ebb_{P_Z}[l(w^{*},Z)]-\Ebb_{P'_Z}[l(w^{*},Z)]$ can be bounded by $\sqrt{2\sigma^2D^{\alpha_1,\alpha_2}_{\JS}(P'_Z||P_Z)\hat{\alpha}_2}$, where $\hat{\alpha}_2=1/\alpha_2+1/(1-\alpha_2)$. Towards this, note that the following exponential inequality holds under Assumption~\ref{assum:2} with $w=w^{*}$
%when the loss function $l(\cdot,\cdot)$ is $[a,b]$ bounded, $l(w,Z)$ is $(b-a)^2/4-$sub-Gaussian under $Z\sim R_Z$ for all $w \in \Wscr$. In particular, taking $w=w^{*}$, this yields the following exponential inequality which holds for all $\lambda \in \Real$
\begin{align}
\Ebb_{R_Z^{\alpha_1}}\biggl[\exp\biggl(\lambda(l(w^{*},Z)-\Ebb_{R_Z^{\alpha_1}}[l(w^{*},Z)]) -\frac{\lambda^2\sigma^2}{2}\biggr)\biggr] \leq 1. \label{eq:excess_1}
\end{align}
Denote $\delta l(w^{*}):=l(w^{*},Z)-\Ebb_{R_Z^{\alpha_1}}[l(w^{*},Z]$. Now, performing change of measure from $R_Z^{\alpha_1}$ to $P'_Z$, and from $R^{\alpha_1}_Z$ to $P_Z$ on \eqref{eq:excess_1} respectively as in Lemma~\ref{lem:expinequality_avg}, yields the following inequalities
\begin{align}
&\Ebb_{P'_Z}\biggl[\exp\biggl(\lambda\delta l(w^{*})-\log \frac{P'_Z (Z)}{R_Z^{\alpha_1}(Z)}-\frac{\lambda^2\sigma^2}{2}\biggr)\biggr] \leq 1, \label{eq:excess_2}\\
%\end{align} and a change of measure from $R_Z^{\alpha_1}$ to $P_Z$ on \eqref{eq:excess_1} gives
%\begin{align}
&\Ebb_{P_Z}\biggl[\exp\biggl(\lambda\delta l(w^{*}) -\log \frac{P_Z(Z)}{R_Z^{\alpha_1}(Z)}-\frac{\lambda^2\sigma^2}{2}\biggr)\biggr] \leq 1. \label{eq:excess_3}
\end{align}Take $\lambda=\lambda_1/\alpha_2$ for some $\lambda_1>0$ in \eqref{eq:excess_3} and $\lambda=-\lambda_1/(1-\alpha_2)$ in \eqref{eq:excess_2}. Now apply Jensen's inequality on \eqref{eq:excess_2} and \eqref{eq:excess_3}, and add the resulting inequalities to get the following inequality
\begin{align}
\Ebb_{P_Z}[l(w^{*},Z)]-L_g(w^{*}) \leq \frac{\lambda_1 \sigma^2 \hat{\alpha}_2}{2}+\frac{1}{\lambda_1}D_{\JS}^{\alpha_1,\alpha_2}(P'_Z||P_Z).
\end{align}Now, letting $\lambda_1=\sqrt{D_{\JS}^{\alpha_1,\alpha_2}(P'_Z||P_Z)/(0.5\sigma^2 \hat{\alpha}_2)}$ yields the required bound.
\section{$(\alpha_1,\alpha_2)$-JS Divergence Based Bound for Loss Functions with Bounded CGF}\label{app:boundedCGF}
\begin{theorem}\label{lem:general}
Assume that the CGF of the loss function $l(W,Z)$ is upper bounded by a function $\Psi(\lambda)$ when $(W,Z) \sim P_W R^{\alpha}_Z$ for $\lambda \in [b_{-},b_{+}]$. Then, for $\beta \in (0,1)$ and $\alpha_2 \in (0,1)$, the following upper bound on the average transfer generalization gap holds,
\begin{align}
&\Delta L^{\avg}\leq  \frac{\gamma}{\beta M}\sum_{i=1}^{\beta M} \inf_{\lambda_1 \in (0,b)}\biggl(\frac{1}{\lambda_1}\biggl(\widehat{\Psi}(\lambda_1)  + D_{\JS}^{\alpha_1,\alpha_2}(P'_Z||P_Z)\non \\& +(1-\alpha_2)I(W;Z_i) \biggr) +\frac{1-\gamma}{(1-\beta)M}\sum_{i=\beta M+1}^M  \inf_{\lambda_2 \in (0,b_2)}\biggl(\frac{1}{\lambda_2}\biggl(\Psi(\lambda_2)\non \\&+\Psi(-\lambda_2)+2D_{\KL}(P'_Z||R_Z^{\alpha_1})+I(W;Z_i) \biggr)\biggr), \label{eq:avgbound_generalloss}
\end{align}
where $$\widehat{\Psi}(\lambda)=\alpha_2\Psi\Bigl(\frac{\lambda_1}{\alpha_2}\Bigr)+(1-\alpha_2)\Psi\Bigl(\frac{-\lambda_1}{1-\alpha_2}\Bigr),$$ $b=\min\{\alpha_2 b_{+},-(1-\alpha_2)b_{-}\}$ and $b_2=\min \{b_{+},-b_{-}\}$.
\end{theorem}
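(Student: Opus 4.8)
The plan is to retrace the proof of Theorem~\ref{thm:averagebound} in Appendix~\ref{app:averagebound} essentially line by line, substituting the sub-Gaussian CGF bound $\lambda^2\sigma^2/2$ by the general bound $\Psi(\lambda)$ throughout, while (i) keeping track of the admissible range $[b_{-},b_{+}]$ of the free dual parameters and (ii) leaving the optimization over these parameters unresolved, since $\Psi$ need not admit a closed-form minimizer. First I would invoke the decomposition \eqref{eq:decomposition} of $\Delta L(w|Z^M)$ into the source-domain difference $\gamma\big(L_g(w)-L_t(w|Z^{\beta M})\big)$ and the target-domain difference $(1-\gamma)\big(L_g(w)-L_t(w|Z^{(1-\beta)M})\big)$, average each over $P_{Z^M}P_{W|Z^M}$, and write them sample-wise exactly as in \eqref{eq:firstdiff}--\eqref{eq:secdiff}.

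Next I would establish the CGF analogue of Lemma~\ref{lem:expinequality_avg}: the hypothesis says $\Ebb_{P_W R^{\alpha_1}_{Z_i}}\big[\exp\big(\lambda(l(W,Z_i)-\Ebb_{P_W R^{\alpha_1}_Z}[l(W,Z)])-\Psi(\lambda)\big)\big]\le 1$ for $\lambda\in[b_{-},b_{+}]$, which plays the role of \eqref{eq:subGaussian-1}; the change-of-measure manipulations of Appendix~\ref{app:proof_Lemma} (insert an indicator over the relevant support, then change measure as in \cite[Prop.~17]{polyanskiy2014lecture}) then yield exponential inequalities structurally identical to \eqref{eq:expinequality-1}--\eqref{eq:expinequality-3} but with $\lambda^2\sigma^2/2$ replaced by $\Psi(\lambda)$, valid for $\lambda\in[b_{-},b_{+}]$.

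For the source-domain difference I would split $\Ebb_{P_W P'_{Z_i}}[l(W,Z_i)]-\Ebb_{P_{Z_i}P_{W|Z_i}}[l(W,Z_i)]$ through the intermediate term $\Ebb_{P_W R^{\alpha_1}_Z}[l(W,Z)]$, take $\lambda=\lambda_1/\alpha_2>0$ in the $P'_{Z_i}$-inequality and $\lambda=-\lambda_1/(1-\alpha_2)<0$ in the $P_{Z_i}P_{W|Z_i}$-inequality, apply Jensen's inequality, divide by $\lambda_1/\alpha_2$ and $\lambda_1/(1-\alpha_2)$ respectively, and add. Using $\alpha_2 D_{\KL}(P'_Z\|R^{\alpha_1}_Z)+(1-\alpha_2)D_{\KL}(P_Z\|R^{\alpha_1}_Z)=D_{\JS}^{\alpha_1,\alpha_2}(P'_Z\|P_Z)$ from \eqref{eq:alpha1alpha2JSdivergence}, this produces, for each such $\lambda_1$, the bound $\tfrac1{\lambda_1}\big(\widehat\Psi(\lambda_1)+D_{\JS}^{\alpha_1,\alpha_2}(P'_Z\|P_Z)+(1-\alpha_2)I(W;Z_i)\big)$ with $\widehat\Psi(\lambda_1)=\alpha_2\Psi(\lambda_1/\alpha_2)+(1-\alpha_2)\Psi(-\lambda_1/(1-\alpha_2))$. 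Admissibility forces $\lambda_1/\alpha_2\le b_{+}$ and $-\lambda_1/(1-\alpha_2)\ge b_{-}$, i.e.\ $\lambda_1\in(0,b)$ with $b=\min\{\alpha_2 b_{+},-(1-\alpha_2)b_{-}\}$; taking the infimum over this range yields the first term of \eqref{eq:avgbound_generalloss}. For the target-domain difference I would split $\Ebb_{P_W P'_{Z_i}}[l(W,Z_i)]-\Ebb_{P'_{Z_i}P_{W|Z_i}}[l(W,Z_i)]$ through the same intermediate term, use $\lambda=\lambda_2>0$ in the $P'_{Z_i}$-inequality and $\lambda=-\lambda_2<0$ in the $P'_{Z_i}P_{W|Z_i}$-inequality, apply Jensen, divide by $\lambda_2$ and add, obtaining $\tfrac1{\lambda_2}\big(\Psi(\lambda_2)+\Psi(-\lambda_2)+2D_{\KL}(P'_Z\|R^{\alpha_1}_Z)+I(W;Z_i)\big)$ for $\lambda_2\in(0,b_2)$ with $b_2=\min\{b_{+},-b_{-}\}$; its infimum is the second term. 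Weighting the two sample averages by $\gamma$ and $1-\gamma$ then gives \eqref{eq:avgbound_generalloss}.

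The main obstacle is purely the bookkeeping of the asymmetric domain $[b_{-},b_{+}]$: because the change of measure rescales $\lambda_1$ by $1/\alpha_2$ and by $-1/(1-\alpha_2)$, one must verify that both rescaled arguments lie in $[b_{-},b_{+}]$, which is precisely what the definitions of $b$ and $b_2$ encode, and the final infima must be taken over these truncated intervals rather than over all of $\mathbb{R}_{+}$. As a consistency check, setting $\Psi(\lambda)=\lambda^2\sigma^2/2$ and $b_{\pm}=\pm\infty$ gives $\widehat\Psi(\lambda_1)=\tfrac12\lambda_1^2\sigma^2\hat\alpha_2$ and $\Psi(\lambda_2)+\Psi(-\lambda_2)=\lambda_2^2\sigma^2$, whose infima over $\lambda_1$ and $\lambda_2$ reproduce exactly the square-root expressions of Theorem~\ref{thm:averagebound}.
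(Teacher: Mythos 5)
Your proposal is correct and follows essentially the same route as the paper's proof: replace the sub-Gaussian term $\lambda^2\sigma^2/2$ by $\Psi(\lambda)$ in the exponential inequalities of Lemma~\ref{lem:expinequality_avg}, redo the two Jensen/change-of-measure steps from the proof of Theorem~\ref{thm:averagebound} with the rescaled parameters $\lambda_1/\alpha_2$ and $-\lambda_1/(1-\alpha_2)$, and take the infima over the truncated intervals $(0,b)$ and $(0,b_2)$ exactly as the definitions of $b$ and $b_2$ require. Your accounting of the KL terms (keeping $D_{\KL}(P_Z\|R^{\alpha_1}_Z)$ in the second inequality so that the sum assembles into $D_{\JS}^{\alpha_1,\alpha_2}(P'_Z\|P_Z)$) is the intended reading of the paper's displayed inequalities.
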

\begin{proof}
The assumption that the CGF of the loss function is bounded by $\Psi(\lambda)$ implies the following inequality for $i=1,\hdots,M$
\begin{align}
\Ebb_{P_W R^{\alpha}_{Z_i}}\biggl[ \exp \biggl(\lambda(l(W,Z_i)-\Ebb_{P_W R_Z^{\alpha}}[l(W,Z)])-\Psi(\lambda) \biggr)\biggr] \leq 1, \label{eq:general-1}
\end{align}for all $\lambda \in [b_{-},b_{+}]$. It is then easy to see that 
the exponential inequalities in Lemma~\ref{lem:expinequality_avg} hold with the term $\lambda^2 \sigma^2/2$ replaced by $\Psi(\lambda)$.

Now, proceed as in the proof of Theorem~\ref{thm:averagebound} in Appendix~\ref{app:averagebound} to bound the difference $\Ebb_{P_W P'_Z}[l(W,Z)]-\Ebb_{P_{Z_i}P_{W|Z_i}}[l(W,Z_i)]$. This results in the following inequalities replacing \eqref{ineq:1} and \eqref{ineq:2}
\begin{align}
&\Ebb_{P_W P'_{Z_i}}[l(W,Z_i)]-\Ebb_{P_W R^{\alpha_1}_Z}[l(W,Z)]\non \\&\leq  \frac{\alpha_2}{\lambda_1}\biggl(\Psi\Bigl(\frac{\lambda_1}{\alpha_2}\Bigr)+ D_{\KL}(P'_Z||R^{\alpha_1}_Z)\biggr) \label{ineq:11}
\\
&\Ebb_{P_W R_Z^{\alpha_1}}[l(W,Z)]-\Ebb_{P_{Z_i}P_{W|Z_i}}[l(W,Z_i)] \non \\&\leq \frac{1-\alpha_2}{\lambda_1}\biggl(\Psi\Bigl(\frac{-\lambda_1}{1-\alpha_2}\Bigr)+ D_{\KL}(P'_Z||R^{\alpha_1}_Z)+I(W;Z_i)\biggr) 
 ,\label{ineq:22}
\end{align} which holds for all $\lambda_1 \in (0,\min\{\alpha_2 b_{+}, -(1-\alpha_2)b_{-})$.
Adding \eqref{ineq:11} and \eqref{ineq:22}, and subsequently optimizing over $\lambda_1$ yields the required bound on $\Ebb_{P_W P'_Z}[l(W,Z)]-\Ebb_{P_{Z_i}P_{W|Z_i}}[l(W,Z_i)]$.

To bound the difference $\Ebb_{P_W P'_Z}[l(W,Z)]-\Ebb_{P'_{Z_i}P_{W|Z_i}}[l(W,Z_i)]$, we proceed as in Appendix~\ref{app:averagebound}  by fixing $ \lambda=\lambda_2$ in \eqref{eq:expinequality-1} and $\lambda=-\lambda_2$ in  \eqref{eq:expinequality-3} with $\lambda^2 \sigma^2/2$ replaced by $\Psi(\lambda)$, for $\lambda_2  \in (0, \min\{-b_{-},b_{+}\})$. This results in the following bound
\begin{align}
&\Ebb_{P_W P'_Z}[l(W,Z)]-\Ebb_{P'_{Z_i}P_{W|Z_i}}[l(W,Z_i)]\leq \non \\
&\frac{1}{\lambda_2}\biggl(\Psi(\lambda_2)+\Psi(-\lambda_2)+2 D_{\KL}(P'_Z||R^{\alpha_1}_Z)+I(W;Z_i) \biggr).
\end{align}Now optimizing over $\lambda_2\in (0, \min\{-b_{-},b_{+}\})$ yields the required bound.
\end{proof}
We now specialize the bound in \eqref{eq:avgbound_generalloss} to the case of sub-gamma loss function.
\begin{corollary}
Assume that the loss function $l(W,Z)$ is sub-gamma with variance parameter $\sigma$ and scale parameter $c$ such that its CGF is upper bounded by $\Psi(\lambda)= \lambda^2 \sigma^2/2(1-c|\lambda|)$ for $|\lambda|<1/c$. Then, the following upper bound holds for the average transfer generalization gap when $\beta \in (0,1)$ and $\alpha_2 =0.5$, 
\begin{align}
& \Delta L^{\avg}\leq \frac{\gamma}{\beta M}\sum_{i=1}^{\beta M}\biggl(2\sqrt{2\sigma^2(D_{\JS}^{\alpha_1,0.5}(P'_Z||P_Z)+0.5 I(W;Z_i))}\non \\&+2c \sqrt{2\sigma^2(D_{\JS}^{\alpha_1,0.5}(P'_Z||P_Z)+0.5I(W;Z_i)} \biggr)+
\non \\&\frac{1-\gamma}{(1-\beta)M}\sum_{i=\beta M+1}^M  \biggl( 2 \sqrt{\sigma^2 (2D_{\KL}(P'_Z||R_Z^{\alpha_1})+I(W;Z_i))}\non \\& +c(2D_{\KL}(P'_Z||R_Z^{\alpha_1})+I(W;Z_i)).\biggr)
%
%\inf_{\lambda_2 \in (0,b_2)}\biggl(\frac{1}{\lambda_2}\biggl(\Psi(\lambda_2)\non \\&+\Psi(-\lambda_2)+2D_{\KL}(P'_Z||R_Z^{\alpha_1})+I(W;Z_i) \biggr)\biggr), \label{eq:avgbound_generalloss}
\end{align}
\end{corollary}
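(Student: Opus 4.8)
The plan is to start from the bounded-cumulant bound \eqref{eq:avgbound_generalloss} of Theorem~\ref{lem:general}, instantiate it with the sub-gamma cumulant bound $\Psi(\lambda)=\lambda^2\sigma^2/\bigl(2(1-c|\lambda|)\bigr)$, which is valid for $|\lambda|<1/c$ so that $b_{+}=-b_{-}=1/c$, specialize to $\alpha_2=0.5$, and then evaluate the two inner one-dimensional infima in closed form. With $\alpha_2=0.5$ one has $\hat\alpha_2=4$ and, since $\Psi$ is even, $\widehat{\Psi}(\lambda_1)=0.5\,\Psi(2\lambda_1)+0.5\,\Psi(-2\lambda_1)=\Psi(2\lambda_1)=\frac{\lambda_1^2(4\sigma^2)}{2(1-2c\lambda_1)}$, i.e.\ again a sub-gamma $\psi$-function with variance parameter $4\sigma^2$ and scale parameter $2c$, whose domain $\lambda_1\in(0,1/(2c))$ coincides exactly with the admissible range $(0,b)$ with $b=\min\{\alpha_2 b_{+},-(1-\alpha_2)b_{-}\}=1/(2c)$ appearing in \eqref{eq:avgbound_generalloss}. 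Similarly, the second inner term has $\psi$-function $\Psi(\lambda_2)+\Psi(-\lambda_2)=2\Psi(\lambda_2)=\frac{\lambda_2^2(2\sigma^2)}{2(1-c\lambda_2)}$, sub-gamma with variance parameter $2\sigma^2$ and scale parameter $c$, on $\lambda_2\in(0,1/c)=(0,b_2)$.

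Next I would invoke the standard closed form for the inverse Cram\'er transform of a sub-gamma distribution: for $\psi(\lambda)=\frac{\lambda^2 v}{2(1-\kappa\lambda)}$ and any $a\ge 0$, the first-order condition gives the interior minimizer $\lambda^{*}=\bigl(\kappa+\sqrt{v/(2a)}\bigr)^{-1}\in(0,1/\kappa)$, and hence $\inf_{\lambda\in(0,1/\kappa)}\frac{1}{\lambda}\bigl(a+\psi(\lambda)\bigr)=\sqrt{2va}+\kappa a$ (cf.\ Boucheron--Lugosi--Massart). Applying this to the source-domain inner infimum with $v=4\sigma^2$, $\kappa=2c$, and $D_{\JS}^{\alpha_1,0.5}(P'_Z||P_Z)+0.5\,I(W;Z_i)$ in place of $a$ produces the term $2\sqrt{2\sigma^2(D_{\JS}^{\alpha_1,0.5}(P'_Z||P_Z)+0.5\,I(W;Z_i))}+2c(D_{\JS}^{\alpha_1,0.5}(P'_Z||P_Z)+0.5\,I(W;Z_i))$, and applying it to the target-domain inner infimum with $v=2\sigma^2$, $\kappa=c$, and $2D_{\KL}(P'_Z||R_Z^{\alpha_1})+I(W;Z_i)$ in place of $a$ produces $2\sqrt{\sigma^2(2D_{\KL}(P'_Z||R_Z^{\alpha_1})+I(W;Z_i))}+c(2D_{\KL}(P'_Z||R_Z^{\alpha_1})+I(W;Z_i))$. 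Substituting these two expressions into, respectively, the $\gamma/(\beta M)$-weighted sum over $i=1,\dots,\beta M$ and the $(1-\gamma)/((1-\beta)M)$-weighted sum over $i=\beta M+1,\dots,M$ in \eqref{eq:avgbound_generalloss} gives the asserted inequality.

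There is essentially no hard step: the single point needing care is to confirm that in each of the two infima the unconstrained minimizer $\lambda^{*}$ indeed falls inside the interval of validity $(0,1/\kappa)$ appearing in Theorem~\ref{lem:general} (which, as noted above, equals $(0,b)$ and $(0,b_2)$ respectively), so that the closed-form evaluation is exact; this holds because $\kappa+\sqrt{v/(2a)}>\kappa$, and because $\lambda\mapsto\frac{1}{\lambda}(a+\psi(\lambda))$ is strictly convex on $(0,1/\kappa)$ and tends to $+\infty$ at both endpoints, so its unique stationary point is the global minimum. Everything else is substitution and simplification of the constants $\sqrt{2\cdot 4\sigma^2}=2\sqrt{2\sigma^2}$ and $\sqrt{2\cdot 2\sigma^2}=2\sqrt{\sigma^2}$.
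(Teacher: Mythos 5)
Your proposal is correct and takes essentially the same route as the paper's proof: instantiate Theorem~\ref{lem:general} with the sub-gamma $\Psi$, observe that $\widehat{\Psi}$ is again sub-gamma with parameters $(4\sigma^2,2c)$ on $(0,b)=(0,1/(2c))$ and that $\Psi(\lambda_2)+\Psi(-\lambda_2)$ is sub-gamma with $(2\sigma^2,c)$ on $(0,b_2)=(0,1/c)$, and evaluate the two inner infima in closed form -- the paper does the identical computation by simply exhibiting the optimizing $\lambda_1,\lambda_2$. One remark: your evaluation correctly produces $2c\,(D_{\JS}^{\alpha_1,0.5}(P'_Z||P_Z)+0.5\,I(W;Z_i))$ as the second source-domain term (consistent with the paper's own optimizing $\lambda_1$), whereas the corollary as printed shows $2c\sqrt{2\sigma^2(\cdot)}$ there, which is a typo in the statement rather than a gap in your argument.
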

\begin{proof}
The proof follows from \eqref{eq:avgbound_generalloss} where the optimizing $\lambda_1$ is given as
$$\frac{\sqrt{ D_{\JS}^{\alpha_1,\alpha_2}(P'_Z||P_Z)+I(W;Z_i)}}{\sqrt{2\sigma^2}+ 2c\sqrt{D_{\JS}^{\alpha_1,\alpha_2}(P'_Z||P_Z) +I(W;Z_i)}} \in (0, \frac{1}{2c})$$
and the optimizing $\lambda_2$ corresponds to
$$ \lambda_2=\frac{\sqrt{2D_{\KL}(P'_Z||R_Z^{\alpha_1})+I(W;Z_i)}}{\sigma+\sqrt{2D_{\KL}(P'_Z||R_Z^{\alpha_1})+I(W;Z_i)}} \in (0, \frac{1}{c}).$$
\end{proof}
\bibliographystyle{IEEEtran}
\bibliography{ref}
\end{document}